\newtheorem{theorem}{Theorem}
\newtheorem{proposition}[theorem]{Proposition}
\newtheorem{example}[theorem]{Example}
\newtheorem{definition}[theorem]{Definition}
\DeclareMathOperator{\hyp}{hyp}
\DeclarePairedDelimiterX\Set[2]{\lbrace}{\rbrace}%
 { #1 \,\delimsize|\, #2 }
\def\@fnsymbol#1{\ensuremath{\ifcase#1\or *\or 1,4\or *2\or *1\or 3\or 1\or 2\or 4\fi}}\makeatother
\begin{document}

\title{Mat\'{e}rn Gaussian processes on Riemannian manifolds}

\author{\!\!Viacheslav Borovitskiy\thanks{Equal contribution. Correspondence to: \email{viacheslav.borovitskiy@gmail.com},\newline\email{a.terenin17@imperial.ac.uk}, and \email{pmostowsky@gmail.com}. \newline\hspace*{1em} Code available at \url{https://github.com/spbu-math-cs/Riemannian-Gaussian-Processes} and\newline \url{https://github.com/aterenin/SparseGaussianProcesses.jl}.}\,\,\footnotemark[2]
\quad Alexander Terenin\footnotemark[3]
\quad Peter Mostowsky\footnotemark[4]
\quad Marc Peter Deisenroth\footnotemark[5]\!
\\[2mm]
\footnotemark[6]\,\,\,St. Petersburg State University
\quad\,\,\,
\footnotemark[7]\,\,\,Imperial College London
\quad\,\,\,
\footnotemark[5]\,\,\,University College London
\\
\footnotemark[8]\,\,\,St. Petersburg Department of Steklov Mathematical Institute of Russian Academy of Sciences
}

\maketitle

\begin{abstract}
Gaussian processes are an effective model class for learning unknown functions, particularly in settings where accurately representing predictive uncertainty is of key importance. 
Motivated by applications in the physical sciences, the widely-used Mat\'{e}rn class of Gaussian processes has recently been generalized to model functions whose domains are Riemannian manifolds, by re-expressing said processes as solutions of stochastic partial differential equations.
In this work, we propose techniques for computing the kernels of these processes on compact Riemannian manifolds via spectral theory of the Laplace--Beltrami operator in a fully constructive manner, thereby allowing them to be trained via standard scalable techniques such as inducing point methods.
We also extend the generalization from the Mat\'{e}rn to the widely-used squared exponential Gaussian process. 
By allowing Riemannian Mat\'{e}rn Gaussian processes to be trained using well-understood techniques, our work enables their use in mini-batch, online, and non-conjugate settings, and makes them more accessible to machine learning practitioners.
\end{abstract}

\section{Introduction}

Gaussian processes (GPs) are a widely-used class of models for learning an unknown function within a Bayesian framework.
They are particularly attractive for use within decision-making systems, e.g. in Bayesian optimization \cite{snoek2012} and reinforcement learning \cite{deisenroth11, deisenroth13}, where well-calibrated uncertainty is crucial for enabling the system to balance trade-offs, such as exploration and exploitation.

A GP is specified through its mean and covariance kernel.
The Mat\'{e}rn family is a widely-used class of kernels, often favored in Bayesian optimization due to its ability to specify smoothness of the GP by controlling differentiability of its sample paths.
Throughout this work, we view the widely-used squared exponential kernel as a Mat\'{e}rn kernel with infinite smoothness.

Motivated by applications areas such as robotics \cite{jaquier19,calinon20} and climate science \cite{camps2016}, recent work has sought to generalize a number of machine learning algorithms from the vector space to the manifold setting.
This allows one to work with data that lives on spheres, cylinders, and tori, for example.
To define such a GP, one needs to define a positive semi-definite kernel on those spaces.

In the Riemannian setting, as a simple candidate generalization for the Mat\'{e}rn or squared exponential kernel, one can consider replacing Euclidean distance in the formula with the Riemannian geodesic distance.
Unfortunately, this approach leads to ill-defined kernels in many cases of interest \cite{feragen15}.

An alternative approach was recently proposed by \textcite{lindgren11}, who adopt a perspective introduced in the pioneering work of \textcite{whittle63} and define a Mat\'{e}rn GP to be the solution of a certain stochastic partial differential equation (SPDE) driven by white noise.
This approach generalizes naturally to the Riemannian setting, but is cumbersome to work with in practice because it entails solving the SPDE numerically.
In particular, setting up an accurate finite element solver can become an involved process, especially for certain smoothness values \cite{bolin17, bolin19}.
This also prevents one from easily incorporating recent advances in scalable GPs, such as sparse inducing point methods \cite{titsias09a, hensman13}, into the framework.
This in turn impedes one from easily employing mini-batch training, online training, non-Gaussian likelihoods, or incorporating GPs as differentiable components within larger models.

In this work, we extend Mat\'{e}rn GPs to the Riemannian setting in a fully constructive manner, by introducing Riemannian analogues of the standard technical tools one uses when working with GPs in Euclidean spaces.
To achieve this, we first study the special case of the $d$-dimensional torus $\bb{T}^d$.
Using ideas from abstract harmonic analysis, we view GPs on the torus as periodic GPs on $\R^d$, and derive expressions for the kernel and spectral measure of a Mat\'{e}rn GP in this case.

Building on this intuition, we generalize the preceding ideas to general compact Riemannian manifolds without boundary. 
Using insights from harmonic analysis induced by the Laplace--Beltrami operator, we develop techniques for computing the kernel and generalized spectral measure of a Mat\'{e}rn GP in this setting.
These expressions enable computations via standard GP approaches, such as Fourier feature or sparse variational methods, thereby allowing practitioners to easily deploy familiar techniques in the Riemannian setting.
We conclude by showcasing how to employ the proposed techniques through a set of examples.

\section{Gaussian processes}

Let $X$ be a set, and let $f: X \-> \R$ be a random function.
We say that $f \~[GP](\mu, k)$ if, for any~$n$ and any finite set of points $\v{x} \in  X^n$, the random vector $\v{f} = f(\v{x})$ is multivariate Gaussian with prior mean vector $\v\mu = \mu(\v{x})$ and covariance matrix $\m{K}_{\v{x}\v{x}} = k(\v{x},\v{x})$.
We henceforth, without loss of generality, set the mean function to be zero.

Given a set of training observations $(x_i, y_i)$, we let $y_i  = f(x_i) + \eps_i$ with $\eps_i \~[N](0,\sigma^2)$. 
Under the prior $f \~[GP](0,k)$ the posterior distribution $f\given\v{y}$ is another GP, with mean and covariance
\<
\E(f\given\v{y}) &= \m{K}_{(\cdot)\v{x}} (\m{K}_{\v{x}\v{x}} + \sigma^2\m{I})^{-1}\v{y}
&
\Cov(f\given\v{y}) &= \m{K}_{(\cdot,\cdot)} - \m{K}_{(\cdot)\v{x}} (\m{K}_{\v{x}\v{x}} + \sigma^2\m{I})^{-1} \m{K}_{\v{x}(\cdot)}
\> 
where $(\cdot)$ denotes an arbitrary set of test locations. The posterior can also be written
\[
\label{eqn:pathwise}
(f\given\v{y})(\cdot) = f(\cdot) + \m{K}_{(\cdot)\v{x}} (\m{K}_{\v{x}\v{x}} + \sigma^2\m{I})^{-1} (\v{y} - f({\v{x}}) - \v\eps)
\]
where equality holds in distribution \cite{wilson20}.
This expression allows one to sample from the posterior by first sampling from the prior, and transforming the resulting draws into posterior samples.

On $X = \R^d$, one popular choice of kernel is the \emph{Mat\'{e}rn} family with parameters $\sigma^2,\kappa,\nu$, defined as
\[
\label{eqn:matern-kernel-eucl}
k_{\nu}(x,x') = \sigma^2 \frac{2^{1-\nu}}{\Gamma(\nu)} \del{\sqrt{2\nu} \frac{\norm{x-x'}}{\kappa}}^\nu K_\nu \del{\sqrt{2\nu} \frac{\norm{x-x'}}{\kappa}}
 \]
where $K_\nu$ is the modified Bessel function of the second kind \cite{gradshteyn14}.
The parameters of this kernel have a natural interpretation: $\sigma^2$ directly controls variability of the GP, $\kappa$ directly controls the degree of dependence between nearby data points, and $\nu$ directly controls mean-square differentiability of the GP \cite{rasmussen06}.
As $\nu\->\infty$, the Mat\'{e}rn kernel converges to the widely-used squared exponential kernel 
\[
\label{eqn:rbf-kernel-eucl}
k_{\infty}(x,x') = \sigma^2 \exp\del[3]{-\frac{\norm{x-x'}^2}{2 \kappa^2}}
\]
which induces an infinitely mean-square differentiable GP.

For a bivariate function $k:X\x X\->\R$ to be a kernel, it must be \emph{positive semi-definite}, in the sense that for any $n$ and any $\v{x}\in X^n$, the kernel matrix $\m{K}_{\v{x}\v{x}}$ is positive semi-definite.
For $X = \R^d$, a translation-invariant kernel $k(x,x') = k(x - x')$ is called \emph{stationary}, and can be characterized via Bochner's Theorem. 
This result states that a translation-invariant bivariate function is positive definite if and only if it is the Fourier transform of a finite non-negative measure $\rho$, termed the \emph{spectral measure}.
This measure is an important technical tool for constructing kernels \cite{rasmussen06}, and for practical approximations such as \emph{Fourier feature} basis expansions \cite{rahimi08,hensman17}.

\subsection{A no-go theorem for kernels on manifolds}

We are interested in generalizing the Mat\'{e}rn family from the vector space setting to a compact Riemannian manifold $(M,g)$
such as the sphere or torus.
One might hope to achieve this by replacing Euclidean norms with the geodesic distances in \eqref{eqn:matern-kernel-eucl} and \eqref{eqn:rbf-kernel-eucl}.
In the latter case, this amounts to defining
\[
\label{eqn:rbf-geodesic}
k(x,x') = \sigma^2 \exp\del{-\frac{d_g(x,x')^2}{2 \kappa^2}}
\]
where $d_g$ is the geodesic distance with respect to $g$ on $M$.
Unfortunately, one can prove this is not generally a well-defined kernel.

\begin{theorem}
\label{thm:no-go}
Let $(M,g)$ be a complete, smooth Riemannian manifold without boundary, with associated geodesic distance $d_g$. If the geodesic squared exponential kernel \eqref{eqn:rbf-geodesic} is positive semi-definite for all $\kappa > 0$, then $M$ is isometric to a Euclidean space.
\end{theorem}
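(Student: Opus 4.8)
The plan is to reduce the statement to classical rigidity phenomena for isometric embeddings of metric spaces into Hilbert space. The key observation is that the hypothesis --- positive semi-definiteness of \eqref{eqn:rbf-geodesic} for \emph{every} $\kappa > 0$ --- is exactly the condition in Schoenberg's theorem: a metric space $(X,d)$ embeds isometrically into some Hilbert space $\mathcal H$ if and only if $\exp\del{-t\, d(\cdot,\cdot)^2}$ is positive semi-definite for all $t > 0$ (equivalently, $d^2$ is conditionally negative definite). Applying this with $X = M$, $d = d_g$, and $t = 1/(2\kappa^2)$ yields a map $\iota\colon M \to \mathcal H$ with $\norm{\iota(x)-\iota(x')} = d_g(x,x')$ for all $x,x'$. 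Throughout I assume $M$ connected, as is implicit in the statement.

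First I would show that $\iota$ straightens geodesics. If $\gamma\colon[0,\ell]\to M$ is a unit-speed minimizing geodesic then $d_g(\gamma(s),\gamma(t)) = |s-t|$, so $c = \iota\circ\gamma$ satisfies $\norm{c(s)-c(t)} = |s-t|$; writing the triangle inequality $\norm{c(\ell)-c(0)} \le \norm{c(t)-c(0)} + \norm{c(\ell)-c(t)}$ as the equality it is, and using that Hilbert space is strictly convex (equality forces the two vectors to be nonnegatively proportional, by the parallelogram identity), one gets $c(t) = c(0) + t\,u$ with $\norm{u}=1$. Thus, fixing a base point $p$ and translating so that $\iota(p)=0$, the image under $\iota$ of each short geodesic ray from $p$ is a Euclidean ray $t\mapsto t\,u$ of unit speed, and any two such rays meet at a fixed Euclidean angle.

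Next I would extract flatness by matching Taylor expansions. Fix $p$ and unit vectors $v_1,v_2\in T_pM$ with angle $\theta$, set $\gamma_i(t) = \exp_p(tv_i)$, and consider $t\mapsto d_g(\gamma_1(t),\gamma_2(t))^2$. By the previous paragraph this equals $\norm{t u_1 - t u_2}^2 = t^2\norm{u_1-u_2}^2$, a pure quadratic in $t$. On the other hand, the standard expansion of the squared Riemannian distance along two geodesics issuing from a point gives $d_g(\gamma_1(t),\gamma_2(t))^2 = 2(1-\cos\theta)\,t^2 + c\,K(v_1,v_2)\sin^2\theta\;t^4 + O(t^5)$, where $c\ne 0$ is a universal constant and $K(v_1,v_2)$ is the sectional curvature of $\operatorname{span}(v_1,v_2)$. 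Comparing the two forces $K(v_1,v_2)=0$ whenever $v_1,v_2$ are independent, so $(M,g)$ is flat. By the Killing--Hopf theorem, a connected, complete, flat Riemannian manifold without boundary is isometric to $\R^n/\Gamma$ for a group $\Gamma$ of Euclidean isometries acting freely and properly discontinuously.

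It remains to show $\Gamma$ is trivial, which I expect to be the only genuinely delicate point. If $\Gamma \ne \{\mathrm{id}\}$, pick $\varphi\in\Gamma$ with $\varphi\ne\mathrm{id}$; decomposing $\R^n = \ker(A-I)\oplus\operatorname{ran}(A-I)$ for the orthogonal part $A$ of $\varphi$ and using that $\varphi$ is fixed-point free, one locates an affine line $L$ with $\varphi(L)=L$ on which $\varphi$ acts as a nonzero translation, whose image in $M$ is a closed geodesic $\gamma\colon\R/\ell\bb{Z}\to M$. But applying the straightening lemma to overlapping short sub-arcs of $\gamma$ shows that $\iota\circ\gamma$ lies on a single Euclidean line, traversed at unit speed --- impossible for a periodic curve. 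Hence $\Gamma=\{\mathrm{id}\}$ and $M$ is isometric to $\R^n$ with its flat metric, i.e.\ to a Euclidean space. The main obstacle here is conceptual rather than computational: spotting the Schoenberg characterization and the attendant geodesic rigidity; granting those, the curvature computation and the flat-manifold classification are routine.
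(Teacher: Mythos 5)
Your argument is correct and is essentially the approach the paper relies on: the paper does not prove Theorem \ref{thm:no-go} itself but delegates it to \textcite[Theorem 2]{feragen15}, whose proof is likewise driven by Schoenberg's characterization of Hilbert-space embeddability via positive semi-definiteness of $\exp(-t\,d^2)$ for all $t>0$. Your reconstruction of the remaining steps --- straightening of minimizing geodesics via strict convexity of the Hilbert norm, vanishing of the $t^4$ term in the expansion of $d_g(\gamma_1(t),\gamma_2(t))^2$ to force flatness, and Killing--Hopf together with the no-periodic-geodesic observation to show the deck group is trivial --- is sound.
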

\begin{proof}
\textcite[Theorem 2]{feragen15}.
\end{proof}

Since Euclidean space is not compact, this immediately implies that \eqref{eqn:rbf-geodesic} is not a well-defined kernel on any compact Riemannian manifold without boundary.
We therefore call \eqref{eqn:rbf-geodesic} and its finite-smoothness analogues the \emph{na\"{i}ve generalization}.

In spite of this issue, the na\"{i}ve generalization is usually still positive semi-definite for some $\kappa$, and it has been used in a number of applied areas \cite{jaquier19}. 
\textcite{feragen16} proposed a number of open problems arising from these issues.
In Section \ref{sec:torus}, we show that, on the torus, the na\"{i}ve generalization is \emph{locally correct} in a sense made precise in the sequel.
We now turn to an alternative approach, which gives well-defined kernels in the general case.

\subsection{Stochastic partial differential equations} \label{sec:spde}

\begin{figure}
\begin{subfigure}{.33\textwidth}
  \centering
  \includegraphics[height=2.5cm]{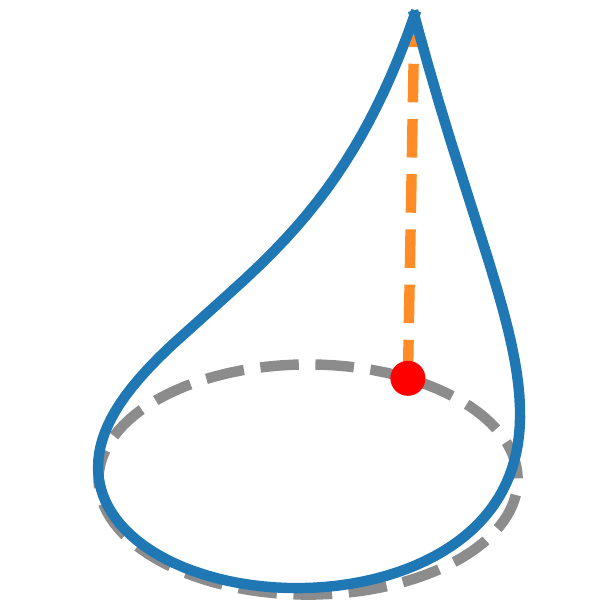}
  \label{fig:circle-kernel}
\end{subfigure}
\begin{subfigure}{.33\textwidth}
  \centering
  \includegraphics[height=2.5cm, trim=100 75 100 75, clip]{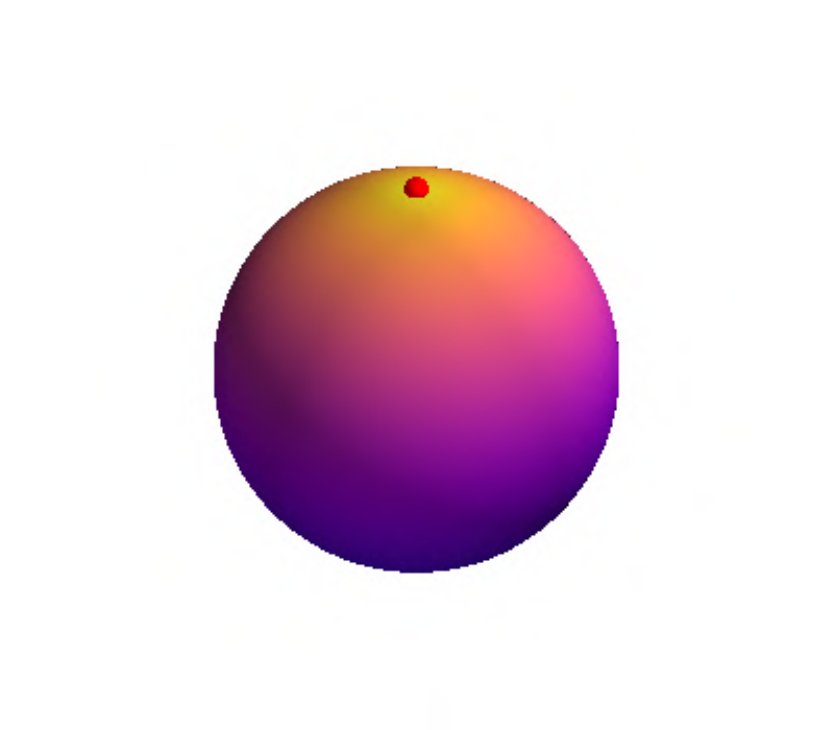}
  \label{fig:sphere-kernel}
\end{subfigure}
\begin{subfigure}{.33\textwidth}
  \centering
  \includegraphics[height=2.5cm, trim=0 30 0 30, clip]{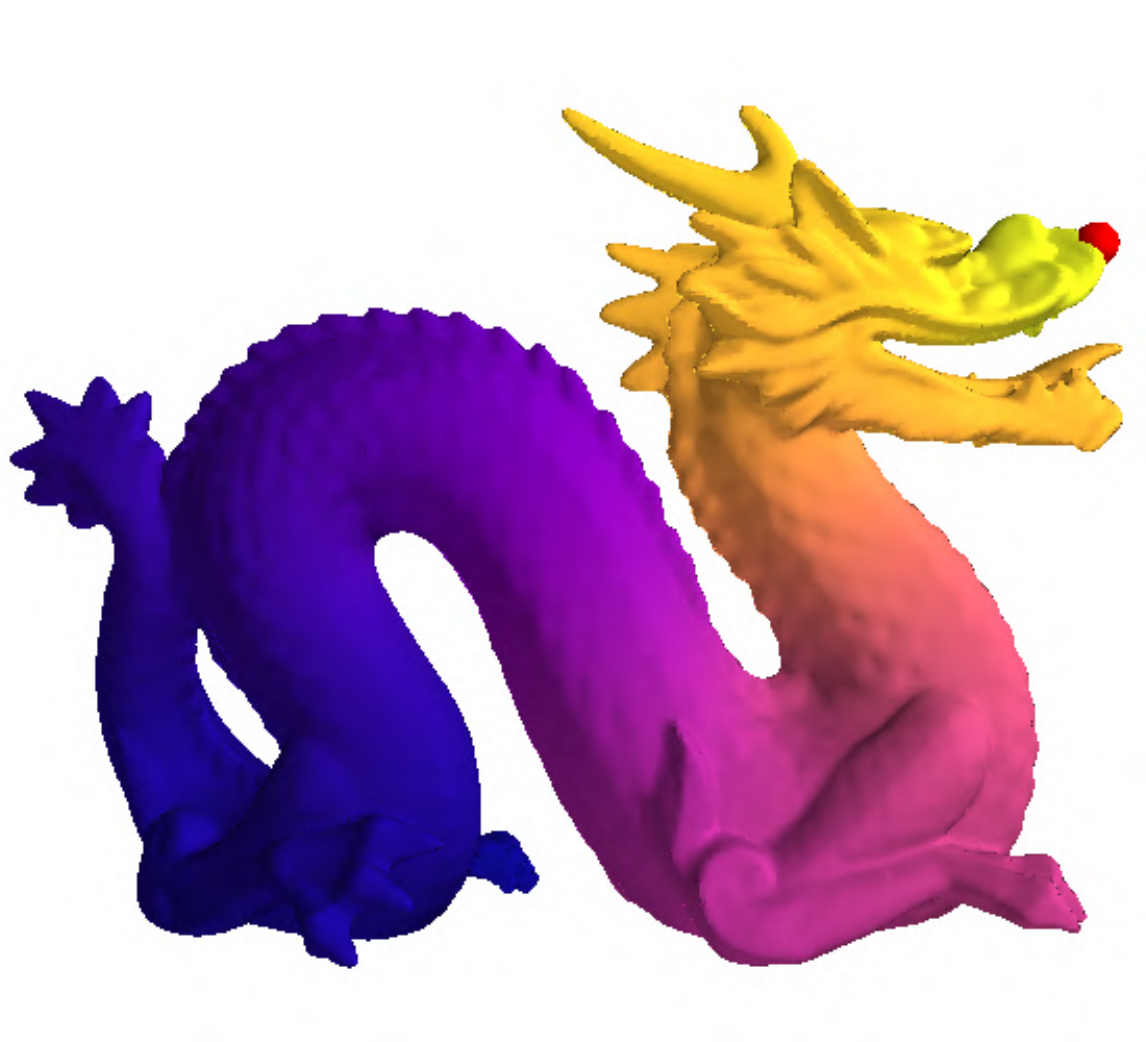}
  \label{fig:dragon-kernel}
\end{subfigure}
\caption{The Mat\'{e}rn kernel $k_{1/2}(x, \cdot)$, defined on a circle, sphere and dragon. The point $x$ is marked with a red dot.
The height of the solid line and color, respectively, give the value of the kernel.
}
\label{fig:kernels}
\end{figure}

\textcite{whittle63} has shown that Mat\'{e}rn GPs on $X = \R^d$ satisfy the stochastic partial differential equation
\[
\label{eqn:spde-matern}
\del{\frac{2 \nu}{\kappa^2} - \lap}^{\frac{\nu}{2} + \frac{d}{4}}f = \c{W}
\]
for $\nu < \infty$, where $\lap$ is the Laplacian and $\c{W}$ is Gaussian white noise re-normalized by a certain constant.
One can show using the same argument that the limiting squared exponential GP satisfies
\[
\label{eqn:spde-rbf}
e^{-\frac{\kappa^2}{4} \Delta} f = \c{W}
\]
where $e^{-\frac{\kappa^2}{4} \Delta}$ is the (rescaled) heat semigroup \cite{evans10, grigoryan2009}.
This viewpoint on GPs has recently been reintroduced in the statistics literature by \textcite{lindgren11}, and a number of authors, including \textcite{simpson12, sarkka13b}, have used it to develop computational techniques, notably in the popular \textsc{inla} package \cite{rue09}.

One advantage of the SPDE definition is that generalizing it to the Riemannian setting is straightforward: one simply replaces $\lap$ with the Beltrami Laplacian and $\c{W}$ with the canonical white noise process with respect to the Riemannian volume measure.
The kernels of these GPs, computed in the sequel, are illustrated in Figure \ref{fig:kernels}.
Unfortunately, the SPDE definition is somewhat non-constructive: it is not immediately clear how to compute the kernel, and even less clear how to generalize familiar tools to this setting.
In practice, this restricts one to working with PDE-theoretic discretization techniques, such as Galerkin finite element methods, the efficiency of which depend heavily on the smoothness of $f$, and which can require significant hand-tuning to ensure accuracy.
It also precludes one from working in non-conjugate settings, such as classification, or from using recently-proposed techniques for scalable GPs via sparse inducing point methods \cite{titsias09a, hensman13, hensman17}, as they require one to either be able to compute the kernel point-wise, or compute the spectral measure, or both.

\subsection{State of affairs and contribution}

In this work, our aim is to generalize the standard theoretical tools available for GPs on $\R^d$ to the Riemannian setting.
Our strategy is to first study the problem for the special case of a $d$-dimensional torus.
Here, we provide expressions for the kernel of a Mat\'{e}rn GP in the sense of \textcite{whittle63} via \emph{periodic summation}, which yields a series whose first term is the na\"{i}ve generalization.
Building on this intuition, we develop a framework using Laplace--Beltrami eigenfunctions that allows us to provide expressions for the kernel and generalized spectral measure of a Mat\'{e}rn GP on a general compact Riemannian manifold without boundary.
The framework is fully constructive and compatible with sparse GP techniques for scalable training.

A number of closely related ideas, beyond those described in the preceding sections, have been considered in various literatures.
\textcite{solin2020} used ideas based on spectral theory of the Laplace--Beltrami operator to approximate stationary covariance functions on bounded domains of Euclidean spaces.
These ideas were applied, for instance, to model ambient magnetic fields using Gaussian processes by \textcite{solin2018}.
An analog of the expression we provide in equation \eqref{eqn:mani-matern-formula} for the Riemannian Mat\'{e}rn kernel was concurrently proposed as a practical GP model by \textcite{coveney2020}---in this work, we \emph{derive} said expression from the SPDE formulation of Mat\'{e}rn GPs.
Finally, the Riemannian squared exponential kernel, also sometimes called the heat or diffusion kernel, has been studied by \textcite{gao2019}.
We connect these ideas with stochastic partial differential equations.

In this work, we concentrate on Gaussian processes $f : M \-> \R$ whose \emph{domain} is a Riemannian manifold.
We do not study models $f : \R \-> M$ where the \emph{range} is a Riemannian manifold---this setting is explored by \textcite{mallasto2018}.

\section{A first example: the $d$-dimensional torus} \label{sec:torus}

\begin{figure}
\<
&\mathclap{
\begin{tikzpicture}[baseline={([yshift=-.5ex]current bounding box.center)}]
\node at (-0.05,-0.1) {};
\node at (0,1) {};
\draw[thick] (0,0) node[circle, fill, inner sep=1] {} to[out=0, in=270] (0.5,0.5) node[circle, fill, inner sep=1] {} ;
\end{tikzpicture}
}
&
&\mathclap{
\begin{tikzpicture}[baseline={([yshift=-.5ex]current bounding box.center)}]
\draw[thick] (0,0) node[circle, fill, inner sep=1] {} to[out=180, in=270] (-0.5,0.5) to[out=90, in=180] (0,1) to[out=0, in=90] (0.5, 0.5) node[circle, fill, inner sep=1] {};
\end{tikzpicture}
}
&
&\mathclap{
\begin{tikzpicture}[baseline={([yshift=-.5ex]current bounding box.center)}]
\draw[thick] (0,0) node[circle, fill, inner sep=1] {} to[out=0, in=270] (0.5,0.5) to[out=90, in=0] (0,1.05) to[out=180, in=90] (-0.55,0.5) to[out=270, in=180] (0, -0.1) to[out=0, in=270] (0.6, 0.5) node[circle, fill, inner sep=1] {};
\end{tikzpicture}
}
&
&\mathclap{
\begin{tikzpicture}[baseline={([yshift=-.5ex]current bounding box.center)}]
\draw[thick] (0,0) node[circle, fill, inner sep=1] {} to[out=180, in=270] (-0.5,0.5) to[out=90, in=180] (0,1) to[out=0, in=90] (0.5, 0.45) to[out=270,in=0] (0,-0.1) to[out=180, in=270] (-0.6,0.5) to[out=90, in=180] (0,1.1) to[out=0, in=90] (0.6,0.5) node[circle, fill, inner sep=1] {};
\end{tikzpicture}
}
&
&\mathclap{
\begin{tikzpicture}[baseline={([yshift=-.5ex]current bounding box.center)}]
\draw[thick] (0,0)
node[circle, fill, inner sep=1] {}
to[out=0, in=270] (0.5,0.5)
to[out=90, in=0] (0,1.05)
to[out=180, in=90] (-0.55,0.5)
to[out=270, in=180] (0, -0.1)
to[out=0, in=270] (0.6, 0.5)
to[out=90, in=0] (0,1.15)
to[out=180, in=90] (-0.65,0.5)
to[out=270, in=180] (0, -0.2)
to[out=0, in=270] (0.7, 0.5)
node[circle, fill, inner sep=1] {};
\end{tikzpicture}
}
\nonumber
\\
&\mathclap{\norm{x - x'}}
&
&\mathclap{\norm{x - x' - 1}}
&
&\mathclap{\norm{x - x' + 1}}
&
&\mathclap{\norm{x - x' - 2}}
&
&\mathclap{\norm{x - x' + 2}}
\nonumber
\>
\caption{
The distances being considered in definitions \eqref{eqn:matern-torus} and \eqref{eqn:rbf-torus}.
}
\label{fig:dist-torus}
\end{figure}
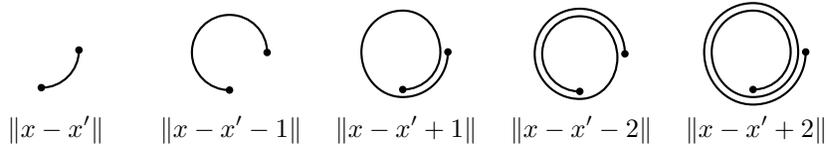

To begin our analysis and build intuition, we study the $d$-dimensional torus $\bb{T}^d$, which is defined as the product manifold $\bb{T}^d = \bb{S}^1 \x ... \x \bb{S}^1$ where $\bb{S}^1$ denotes a unit circle\footnote{Note that $\bb{T}^2 = \bb{S}^1 \times \bb{S}^1$ is diffeomorphic but \emph{not} isometric to the usual donut-shaped torus whose metric is induced by embedding in $\R^3$. This is important, because it is the Riemannian metric structure that gives rise to the Laplace--Beltrami operator and hence to the generalized Mat\'{e}rn and squared exponential kernels. Diffeomorphisms do not necessarily preserve metric structure, so they may not preserve kernels.}.
Since functions on a circle can be thought of as periodic functions on $\R$, and similarly for $\bb{T}^d$ and $\R^d$, defining a kernel on a torus is equivalent to defining a periodic kernel.
For a general function $f : \R^d \-> \R$, one can transform it into a function $g : \bb{T}^d  \-> \R$ by \emph{periodic summation}
\[ \label{eqn:periodic-summation}
g(x_1,...,x_d) = \sum_{n\in\Z^d} f(x_1 + n_1,...,x_d + n_d)
\]
where $x_j \in [0, 1)$ is identified with the angle $2\pi x_j$ and the point $\exp(2 \pi i x_j) \in \bb{S}^1$.
Define addition of two points in $\bb{S}^1$ by the addition of said numbers modulo $1$, and define addition in $\bb{T}^d$ component-wise.

Periodic summation preserves positive-definiteness, since it preserves positivity of the Fourier transform, which by Bochner's theorem is equivalent to positive-definiteness---see \textcite[Section 4.4.4]{scholkopf2002} for a formal proof.
This gives an easy way to construct positive-definite kernels on~$\bb{T}^d$.
In particular, we can generalize Mat\'{e}rn and squared exponential GPs from $\R^d$ to $\bb{T}^d$ by defining
\[
\label{eqn:matern-torus}
k_{\nu}(x,x')
=
\sum_{n\in\Z^d}
    \frac{\sigma^22^{1-\nu}}{C'_\nu\Gamma(\nu)}
    \del{
        \sqrt{2\nu}
        \frac{\norm{x-x' + n}}{\kappa}
    }^\nu
    K_\nu \del{
        \sqrt{2\nu}
        \frac{\norm{x-x' + n}}{\kappa}
    }
\]
where $C'_{(\cdot)}$ is a constant given in Appendix \ref{apdx:examples} to ensure $k_{(\cdot)}(x,x) = \sigma^2$, and
\[
\label{eqn:rbf-torus}
k_{\infty}(x,x')
=
\sum_{n\in\Z^d}
    \frac{\sigma^2}{C'_\infty}
    \exp\del{
        -
        \frac{\norm{x-x' + n}^2}{2 \kappa^2}
    }
\]
respectively.
We prove that these are the covariance kernels of the SPDEs introduced previously.

\begin{proposition}
\label{prop:matern-and-se-torus}
The Mat\'{e}rn (squared exponential) kernel $k$ in  \eqref{eqn:matern-torus}  (resp. \eqref{eqn:rbf-torus}) is the covariance kernel of the Mat\'{e}rn (resp. squared exponential) Gaussian process in the sense of \textcite{whittle63}.
\end{proposition}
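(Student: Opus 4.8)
The plan is to compute the covariance kernel of each SPDE directly from the spectral decomposition of the Laplace--Beltrami operator on $\bb{T}^d$, and then to recognise the resulting Fourier series as the periodic sums \eqref{eqn:matern-torus} and \eqref{eqn:rbf-torus} via the Poisson summation formula, using the classical fact that the spectral density of the Euclidean Mat\'{e}rn and squared exponential kernels is an inverse polynomial and a Gaussian respectively.

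First I would equip $\bb{T}^d = \R^d/\Z^d$ with the flat metric inherited from $\R^d$ --- this is the metric implicit in writing the Euclidean distances $\norm{x-x'+n}$ in \eqref{eqn:matern-torus}--\eqref{eqn:rbf-torus} --- and record that its Laplace--Beltrami operator $\Delta$ is diagonalised by the characters $\phi_n(x) = e^{2\pi i\langle n, x\rangle}$, $n\in\Z^d$, which are orthonormal in $L^2$ with respect to the Riemannian volume measure (Lebesgue measure on $[0,1)^d$) and satisfy $-\Delta\phi_n = 4\pi^2\norm{n}^2\phi_n$. Writing white noise as $\c{W} = \sum_{n\in\Z^d} z_n\phi_n$ with $z_n$ i.i.d. standard Gaussian, the Mat\'{e}rn GP in the sense of \textcite{whittle63}, i.e. the solution of \eqref{eqn:spde-matern} on $\bb{T}^d$, is
\[
f = \sum_{n\in\Z^d}\del{\frac{2\nu}{\kappa^2} + 4\pi^2\norm{n}^2}^{-\frac{\nu}{2}-\frac{d}{4}} z_n \phi_n ,
\]
the squared exponential case being identical with coefficient $e^{-\pi^2\kappa^2\norm{n}^2}$; the series converges in mean square, and the kernel obtained below is well defined pointwise, exactly because $\nu>0$. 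Consequently the covariance kernel is the translation-invariant function
\[
k(x,x') = \sum_{n\in\Z^d} S(n)\,e^{2\pi i\langle n, x-x'\rangle}, \qquad S(\xi) = \del{\frac{2\nu}{\kappa^2} + 4\pi^2\norm{\xi}^2}^{-\nu-\frac{d}{2}}
\]
(resp. $S(\xi) = e^{-2\pi^2\kappa^2\norm{\xi}^2}$), up to an overall positive constant.

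The final step would be to identify this spectral series with the periodic sum. Up to a multiplicative constant, $S$ is precisely the Fourier transform of the Euclidean Mat\'{e}rn kernel \eqref{eqn:matern-kernel-eucl} (resp. the Euclidean squared exponential kernel \eqref{eqn:rbf-kernel-eucl}), which is what Bochner's theorem and the SPDE characterisation of \textcite{whittle63} assert on $\R^d$. Both the Euclidean kernel $k_{\R^d}$ and its transform $\widehat{k_{\R^d}}\propto S$ decay fast enough for Poisson summation to hold pointwise: the kernel decays exponentially (resp. like a Gaussian) and $S$ decays like $\norm{\xi}^{-2\nu-d}$ (resp. like a Gaussian), which is summable over $\Z^d$ since $\nu>0$. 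Poisson summation then yields
\[
\sum_{m\in\Z^d} k_{\R^d}(x-x'+m) = \sum_{n\in\Z^d} \widehat{k_{\R^d}}(n)\,e^{2\pi i\langle n,x-x'\rangle} \propto \sum_{n\in\Z^d} S(n)\,e^{2\pi i\langle n,x-x'\rangle},
\]
so the left-hand side, which up to a constant is \eqref{eqn:matern-torus} (resp. \eqref{eqn:rbf-torus}), agrees up to a constant with the SPDE kernel; that constant is then pinned down by requiring $k(x,x)=\sigma^2$, which is exactly the role of $C'_\nu$ (resp. $C'_\infty$).

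I expect the main obstacle to be analytic bookkeeping rather than any conceptual difficulty: one has to make the SPDE solution rigorous (summability of the spectral coefficients and existence of a continuous modification, both hinging on $\nu>0$ together with the growth $\lambda_n\asymp\norm{n}^2$), pin down the exact constant in the Fourier transform of the Euclidean Mat\'{e}rn kernel under the chosen normalisation of the transform, and verify the decay hypotheses for Poisson summation. The one genuinely conceptual point requiring care is that the relevant metric on $\bb{T}^d$ is the flat quotient metric and not the embedded ``donut'' metric, which is precisely why the Laplace--Beltrami eigenfunctions are exactly the characters $e^{2\pi i\langle n,\cdot\rangle}$, and hence why the Euclidean distance appears in \eqref{eqn:matern-torus}--\eqref{eqn:rbf-torus}.
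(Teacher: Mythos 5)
Your proposal is correct and follows essentially the same route as the paper's proof: both express the Whittle/SPDE kernel on $\bb{T}^d$ as a Fourier series in the Laplace--Beltrami eigenbasis with coefficients equal to the Euclidean Mat\'{e}rn (resp.\ squared exponential) spectral density evaluated at integer frequencies, identify the periodic sum with the same series via Poisson summation, and absorb the discrepancy into the normalizing constants $C_\nu, C'_\nu$. The only cosmetic difference is that the paper invokes its general spectral formula \eqref{eqn:mani-matern-formula}--\eqref{eqn:mani-rbf-formula} (justified rigorously in Appendix~\ref{apdx:theory}) and works with the real eigenbasis of sines and cosines rather than re-deriving the spectral expansion of the white noise directly in the complex character basis as you do.
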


\begin{proof}
Appendix \ref{apdx:torus-proof}.
\end{proof}

This result offers an intuitive explanation for \emph{why} the na\"{i}ve generalization based on the geodesic distance might fail to be positive semi-definite on non-Euclidean spaces for all length scales, yet work well for smaller length scales: on $\bb{T}^d$, it is \emph{locally correct} in the sense that it is equal to the first term in the periodic summation \eqref{eqn:matern-torus}.
To obtain the full generalization, one needs to take into account not just geodesic paths, but geodesic-like paths which include loops around the space---a Mat\'{e}rn GP incorporates global topological structure of its domain.
For the circle, these are visualized in Figure \ref{fig:dist-torus}.
For spaces where this structure is even more elaborate, definitions based purely on geodesic distances may not suffice to ensure positive semi-definiteness or good numerical behavior.
We conclude by presenting a number of practical formulas for Mat\'{e}rn kernels on the circle.

\begin{example}[Circle]
Take $M = \bb{S}^1$. For $\nu = \infty$, the kernel and spectral measure are
\<
k_{\infty}(x, x^\prime)
&=
\frac{\sigma^2}{C_\infty} \vartheta_3(\pi(x - x'), \exp(-2 \pi^2 \kappa^2))
&
\rho_{\infty}(n)
&=
\frac{\sigma^2}{C_\infty} \exp(-2 \pi^2 \kappa^2 n^2)
\>
where $n \in \Z$, $\vartheta_3(\cdot, \cdot)$ is the third Jacobi theta function \cite[equation 16.27.3]{abramowitz1972}, and $C_\infty = \vartheta_3(0, \exp(-2 \pi^2 \kappa^2))$.
This kernel is normalized to have variance $\sigma^2$.
\end{example}

\begin{example}[Circle]
Take $M = \bb{S}^1$. For $\nu = 1/2$, the kernel and spectral measure are
\<
k_{1/2}(x, x^\prime)
&=
\frac{\sigma^2}{C_{1/2}}
\cosh \del{\frac{\abs{x-x'} - 1/2}{\kappa}}
&
\rho_{1/2}(n)
&=
\frac{
    2\sigma^2
    \sinh\del{\nicefrac{1}{2 \kappa}}
    }
    {
    C_{1/2} \kappa
    }
\del{\frac{1}{\kappa^2} + 4\pi^2 n^2 }^{-1}
\!\!\!\!\!
\>
where $C_{1/2} = \cosh\del{\nicefrac{1}{2 \kappa} }$.
This kernel is normalized to have variance $\sigma^2$.
Note that $0 \!\leq\! \abs{x-x'} \!\leq\! \frac{1}{2}$.
\end{example}

A derivation and more general formula, valid for $\nu = 1/2 + n$, $n \in \N$, can be found in Appendix \ref{apdx:examples}.
Note that these spectral measures are \emph{discrete}, as the Laplace--Beltrami operator has discrete spectrum.
Finally, we give the Fourier feature approximation \cite{rahimi08,hensman17} of the GP prior on $\bb{T}^1 = \bb{S}^1$,~which~is
\<
f(x)
&\approx
\sum_{n=-N}^{N}
\sqrt{\rho_{\nu}(n)}
\del[1]{
    w_{n, 1} \cos(2 \pi n x)
    +
    w_{n, 2} \sin(2 \pi n x)
}
&
w_{n,j} &\~[N](0,1)
.
\>
We have defined Mat\'{e}rn and squared exponential GPs on $\bb{T}^d$ and given expressions for the kernel, spectral measure, and Fourier features on $\bb{T}^1$.
With sharpened intuition, we now study the general~case.

\section{Compact Riemannian manifolds} \label{sec:comp_man}

The arguments used in the preceding section are, at their core, based on ideas from abstract harmonic analysis connecting $\R^d$, $\bb{T}^d$, and $\Z^d$ as topological groups.
This connection relies on the algebraic structure of groups, which does not exist on a general Riemannian manifold.
As a result, different notions are needed to establish a suitable framework.

Let $(M,g)$ be a compact Riemannian manifold without boundary, and let $\lap_g$ be the Laplace--Beltrami operator.
Our aim is to compute the covariance kernel of the Gaussian processes solving the SPDEs \eqref{eqn:spde-matern} and \eqref{eqn:spde-rbf} in this setting.
Mathematically, this amounts to introducing an appropriate formalism so that one can calculate the desired expressions using spectral theory.
We do this in a fully rigorous manner in Appendix \ref{apdx:theory}, while here we present the main ideas and results.

First, we discuss how the operators on the left-hand side of SPDEs \eqref{eqn:spde-matern} and \eqref{eqn:spde-rbf} are defined.
By compactness of $M$, $-\lap_g$ admits a countable number of eigenvalues, which are non-negative and can be ordered to form a non-decreasing sequence with $\lambda_n\->\infty$ for $n\->\infty$.
Moveover, the corresponding eigenfunctions form an orthonormal basis $\{f_n\}_{n\in\Z_+}$ of $L^2(M)$, and $-\lap_g$ admits the representation
\[ \label{eqn:lap_spec}
-\lap_g f = \sum_{n=0}^\infty \lambda_n \innerprod{f}{f_n} f_n
\]
which is termed the \emph{Sturm--Liouville decomposition} \cite{chavel1984,canzani13}.
This allows one to define the operators $\Phi(-\lap_g)$ for a function $\Phi: [0, \infty) \to \R$, by replacing $\lambda_n$ with $\Phi(\lambda_n)$ in \eqref{eqn:lap_spec}, and specifying appropriate function spaces as domain and range to ensure convergence of the series in a suitable sense.
This idea is called \emph{functional calculus} for the operator $-\lap_g$.
Using it, we define
\< \label{eqn:operator_definitions}
\del{\frac{2 \nu}{\kappa^2} - \lap_g}^{\frac{\nu}{2}+\frac{d}{4}} f
&=
\sum_{n=0}^\infty \del{\frac{2 \nu}{\kappa^2} + \lambda_n}^{\frac{\nu}{2}+\frac{d}{4}} \innerprod{f}{f_n}   f_n
\\
e^{-\frac{\kappa^2}{4} \lap_g} f
&=
\sum_{n=0}^\infty e^{\frac{\kappa^2 \lambda_n}{4}} \innerprod{f}{f_n} f_n
.
\>
Figure \ref{fig:eigenfuntions} illustrates the eigenfunctions $f_n$.
Note that when $M = \bb{T}^d$, the orthonormal basis $\{f_n\}_{n\in\Z_+}$ consists of sines and cosines, and thus the corresponding functional calculus is defined in terms of standard Fourier series.
This also agrees with the usual way of defining such operators in the Euclidean case using the Fourier transform.

\begin{figure}
\centering
\includegraphics[height=1.875cm, trim=0 0 20 0, clip]{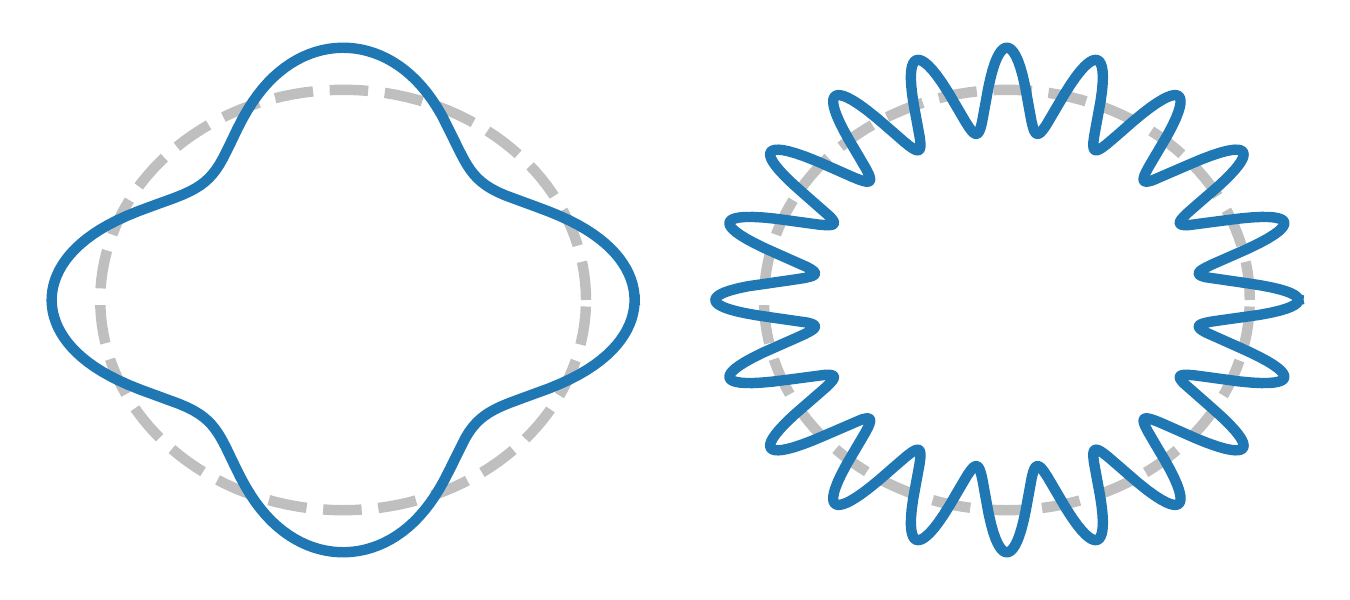}
\includegraphics[height=1.875cm, trim=10 10 0 10, clip]{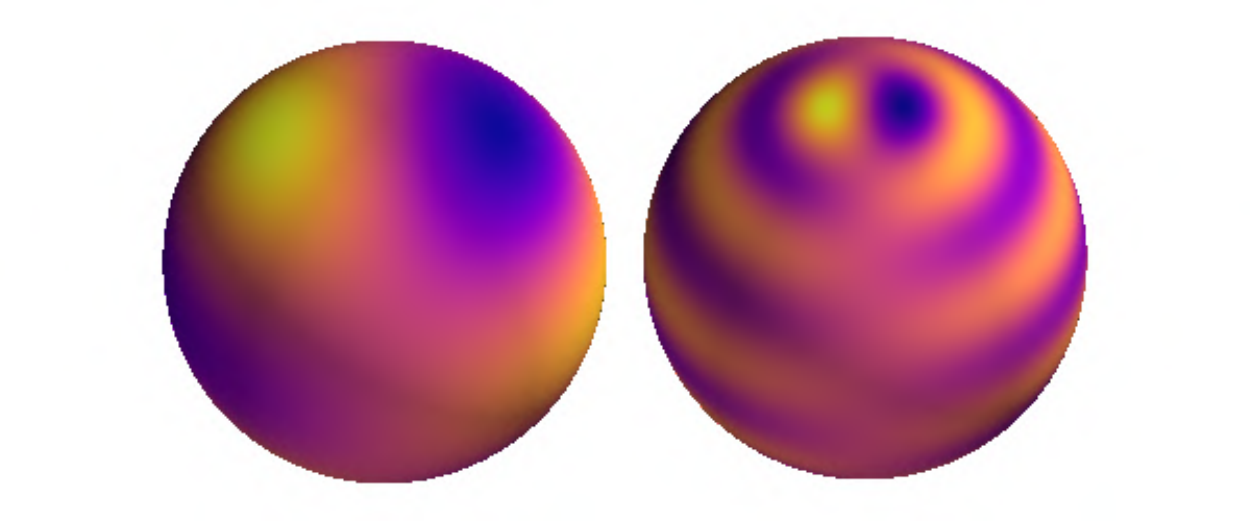}  
\includegraphics[height=1.875cm, trim=0 20 0 20, clip]{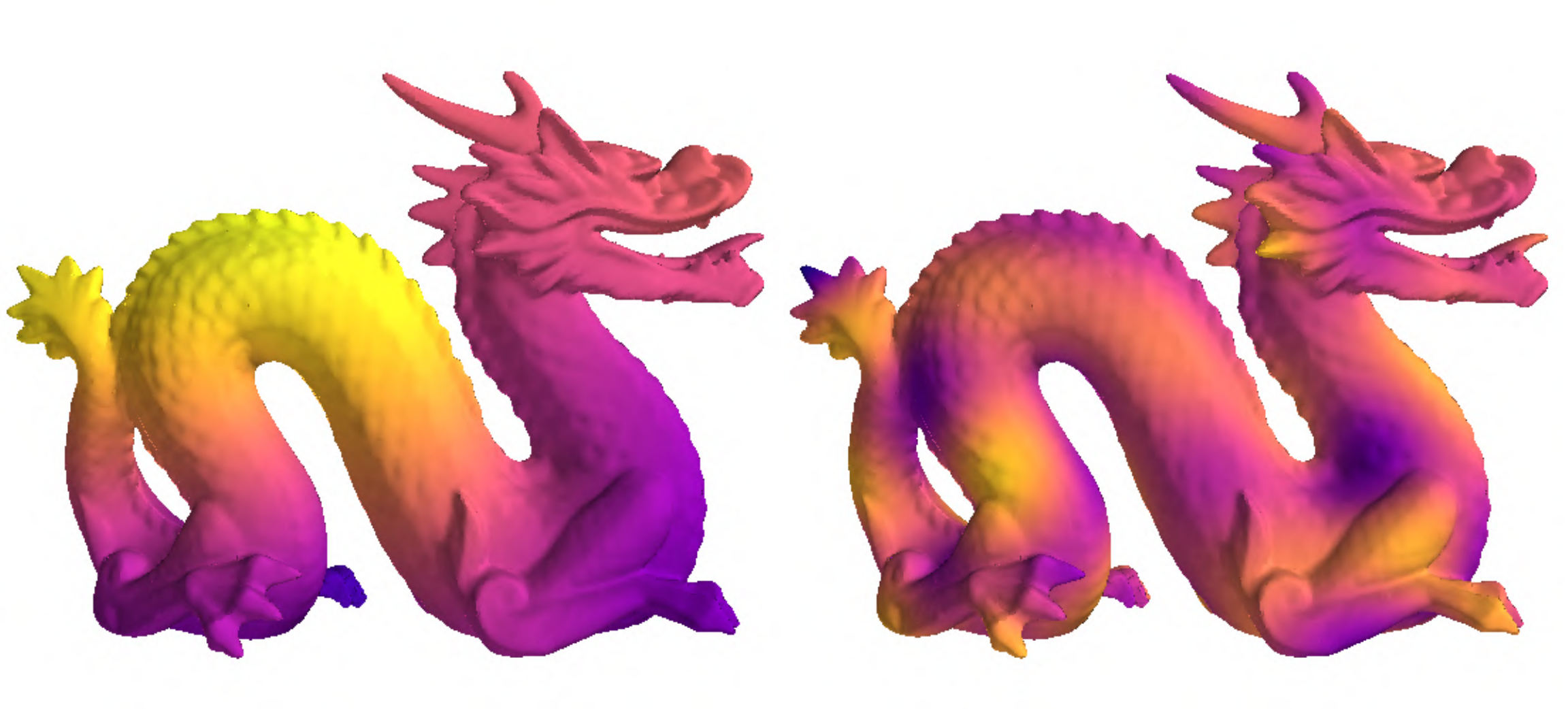} 
\caption{Examples of eigenfunctions of Laplace--Beltrami operator on a circle, sphere, and dragon. For the circle, the value of the eigenfunction is given by the (signed) distance between the solid line and dashed unit circle.
For the sphere and dragon, the value of the eigenfunction is given by the color.
}
\label{fig:eigenfuntions}
\end{figure}

Next, we proceed to define the remaining parts of the SPDEs.
The theory of stochastic elliptic equations described in \textcite{lototsky17} gives an appropriate notion of white noise $\c{W}$ for our setting, as well as a way to uniquely solve SPDEs of the form $\c{L} f = \c{W}$, where $\c{L}$ is a bounded linear bijection between a pair of Hilbert spaces.
We show that the operators
\<
\del{\frac{2 \nu}{\kappa^2} - \lap_g}^{\frac{\nu}{2}+\frac{d}{4}} &: H^{\nu + \frac{d}{2}}(M) \to L^2(M)
&
e^{\frac{\kappa^2}{4} \lap_g} &: \c{H}^{\frac{\kappa^2}{2}}(M) \to L^2(M)
\>
are bounded and invertible, where $H^s(M)$ are appropriately defined Sobolev spaces on the manifold, and $\c{H}^s(M)$ are the \emph{diffusion spaces} studied by \textcite{devito19}.

We prove that the solutions of our SPDEs in the sense of \textcite{lototsky17} are Gaussian processes with kernels equal to the reproducing kernels of the spaces $H^{\nu + d/2}(M)$ and $\c{H}^{\kappa^2/2}(M)$, which are given by \textcite{devito19}.
Summarizing, we get the following.

\begin{theorem}
Let $\lambda_n$ be eigenvalues of $-\lap_g$, and let $f_n$ be their respective eigenfunctions.
The kernels of the Mat\'{e}rn and squared exponential GPs on $M$ in the sense of \textcite{whittle63} are given by
\<
\label{eqn:mani-matern-formula}
k_{\nu}(x, x') &= \frac{\sigma^2}{C_\nu}\sum_{n=0}^\infty \del{\frac{2 \nu}{\kappa^2} + \lambda_n}^{-\nu-\frac{d}{2}} f_n(x)f_n(x')
\\
\label{eqn:mani-rbf-formula}
k_{\infty}(x, x') &= \frac{\sigma^2}{C_\infty}\sum_{n=0}^\infty e^{-\frac{\kappa^2}{2} \lambda_n} f_n(x)f_n(x')
\>
where $C_{(\cdot)}$ are normalizing constants chosen so that the average variance\footnote{The marginal variance $k_{(\cdot)}(x, x)$ can depend on $x$, thus we normalize the kernel by the average variance.}
over the manifold satisfies $\vol_g(M)^{-1} \int_X k_{(\cdot)}(x, x) \d x = \sigma^2$.
\end{theorem}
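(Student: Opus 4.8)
The plan is to make each ingredient of the SPDE formulation rigorous and then extract the covariance function. There are four steps: (i) define the operators on the left-hand sides of \eqref{eqn:spde-matern} and \eqref{eqn:spde-rbf} via the functional calculus of $-\lap_g$ and show that, with appropriate domains, they are bounded linear bijections onto $L^2(M)$; (ii) apply the stochastic elliptic equation theory of \textcite{lototsky17} to obtain a unique Gaussian solution $f = \c{L}^{-1}\c{W}$ living in a Sobolev, respectively diffusion, space; (iii) compute the covariance operator of $f$ as $(\c{L}^{*}\c{L})^{-1}$, recognize it as a spectral multiplier of $-\lap_g$, and read off its integral kernel; and (iv) fix the normalizing constant.

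For step (i), given a positive multiplier $\Phi\colon[0,\infty)\->\R$ one sets $\Phi(-\lap_g)f = \sum_n \Phi(\lambda_n)\innerprod{f}{f_n}f_n$ on the maximal domain $\{f\in L^2(M) : \sum_n \Phi(\lambda_n)^2\innerprod{f}{f_n}^2<\infty\}$. Since $H^s(M)$ and $\c{H}^s(M)$ are, under the definitions adopted from \textcite{devito19}, characterized by summability of $(1+\lambda_n)^{s}\innerprod{f}{f_n}^2$ and of an exponential weight $e^{c s \lambda_n}\innerprod{f}{f_n}^2$, the multipliers $(2\nu/\kappa^2+\lambda)^{\nu/2+d/4}$ and $e^{\kappa^2\lambda/4}$ are, up to constants, the square roots of the weights defining $H^{\nu+d/2}(M)$ and $\c{H}^{\kappa^2/2}(M)$; hence the operators in \eqref{eqn:operator_definitions} restrict to Banach-space isomorphisms $H^{\nu+d/2}(M)\to L^2(M)$ and $\c{H}^{\kappa^2/2}(M)\to L^2(M)$. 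This is routine bookkeeping with the coefficients $\innerprod{f}{f_n}$. For step (ii), white noise $\c{W}$ is the standard Gaussian cylindrical measure on $L^2(M)$ relative to $\vol_g$, and for a bounded linear bijection $\c{L}$ between Hilbert spaces \textcite{lototsky17} give $f=\c{L}^{-1}\c{W}$ as the unique solution of $\c{L}f=\c{W}$, a Gaussian random element of the range space. Taking $\c{L}$ to be each operator above produces Gaussian processes valued almost surely in $H^{\nu+d/2}(M)$, respectively $\c{H}^{\kappa^2/2}(M)$; since $\nu>0$ gives $\nu+d/2>d/2$, the Sobolev embedding $H^{\nu+d/2}(M)\hookrightarrow C(M)$, and its diffusion-space analogue, makes the point evaluations continuous, so $f$ has a version that is a genuine pointwise-defined GP.

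For steps (iii)--(iv): because $\c{W}$ has identity covariance on $L^2(M)$, the covariance operator of $f=\c{L}^{-1}\c{W}$ is $\c{L}^{-1}(\c{L}^{-1})^{*}=(\c{L}^{*}\c{L})^{-1}$, with adjoints taken on $L^2(M)$; as both our operators are self-adjoint and positive under the functional calculus, this equals $\c{L}^{-2}$, i.e.\ the multiplier operator with symbol $\lambda\mapsto(2\nu/\kappa^2+\lambda)^{-\nu-d/2}$, respectively $\lambda\mapsto e^{-\kappa^2\lambda/2}$, applied to $-\lap_g$. Such a multiplier operator $m(-\lap_g)$, when $m(\lambda_n)$ decays fast enough for it to be trace class, has integral kernel $\sum_n m(\lambda_n)f_n(x)f_n(x')$; equivalently this series is the reproducing kernel of the associated RKHS, which is exactly the computation carried out by \textcite{devito19}, and it is pinned down as the covariance of $f$ because the RKHS of a centered Gaussian measure's covariance is its Cameron--Martin space. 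Substituting $m(\lambda_n)=(2\nu/\kappa^2+\lambda_n)^{-\nu-d/2}$ and $m(\lambda_n)=e^{-\kappa^2\lambda_n/2}$, and dividing by $C_{(\cdot)}$, which by orthonormality of the $f_n$ in $L^2(M)$ satisfies $\vol_g(M)^{-1}\int_M k_{(\cdot)}(x,x)\d x = \sigma^2\vol_g(M)^{-1}\sum_n m(\lambda_n)$, yields exactly \eqref{eqn:mani-matern-formula} and \eqref{eqn:mani-rbf-formula}.

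I expect the main obstacle to be the convergence-and-regularity bookkeeping around the kernel series: one must show $\sum_n m(\lambda_n)f_n(x)f_n(x')$ converges absolutely and uniformly to a continuous function, so that it genuinely represents the covariance of $f$ rather than a merely formal symbol. For the Mat\'ern kernel this combines Weyl's law (the $\lambda_n$ grow like $n^{2/d}$) with sup-norm bounds on eigenfunctions ($\norm{f_n}_\infty\le C\lambda_n^{(d-1)/4}$) to see that the exponent $-\nu-d/2$ is summable precisely for $\nu>0$; the squared exponential case is easier, since the exponential multiplier dominates any polynomial growth. The remaining points --- that \textcite{lototsky17} genuinely applies to our $\c{L}$, and that a centered Gaussian measure on a space continuously embedded in $C(M)$ has covariance function equal to the reproducing kernel of its Cameron--Martin space --- are standard; I would carry out all details in Appendix~\ref{apdx:theory} and present only this outline in the main text.
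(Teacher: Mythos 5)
Your proposal follows essentially the same route as the paper's Appendix~\ref{apdx:theory}: functional calculus for $-\lap_g$ via the Sturm--Liouville decomposition, the \textcite{lototsky17} solution theory for $\c{L}f=\c{W}$ with $\c{L}$ a bounded bijection onto $L^2(M)$, identification of the covariance with the reproducing kernel of the Sobolev (resp.\ diffusion) space of \textcite{devito19}, and normalization by the average variance --- the paper merely handles general $\kappa$ by a metric rescaling to $2\nu/\kappa^2=1$ and extracts the kernel by showing the solution is white noise over the RKHS and evaluating $\f{F}(k(x,\cdot))$, rather than by writing the covariance operator as $\c{L}^{-2}$ and reading off its integral kernel, which is an equivalent computation. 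One small caveat on your convergence remark: the pointwise eigenfunction bound $\norm{f_n}_\infty \le C\lambda_n^{(d-1)/4}$ combined with Weyl's law only yields absolute uniform convergence of the Mat\'ern series for $\nu > (d-1)/2$, not for all $\nu>0$; to cover the full range one needs the on-diagonal local Weyl law $\sum_{\lambda_n\le\Lambda} f_n(x)^2 = O(\Lambda^{d/2})$ uniformly in $x$ (or, as you also note, the RKHS argument of \textcite{devito19}, which is what the paper in fact relies on).
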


\begin{proof}
Appendix \ref{apdx:theory}.
\end{proof}

Our attention now turns to the spectral measure.
In the Euclidean case, the spectral measure, assuming sufficient regularity, is absolutely continuous---its Lebesgue density is given by the Fourier transform of the kernel. 
In the case of $\bb{T}^d$, the spectral measure is discrete---its density with respect to the counting measure is given by the Fourier coefficients of the kernel.
Like in the case of the torus, for a compact Riemannian manifold the spectral measure is discrete---its density with respect to the counting measure is given by the generalized Fourier coefficients of the kernel with respect to the orthonormal basis $f_n(x)f_{n'}(x')$ on $L^2(M\x M)$.
For Mat\'{e}rn and square exponential GPs, these are
\<
\label{eqn:matern-spectral-manifold}
\rho_{\nu}(n) &=  \frac{\sigma^2}{C_\nu} \del{\frac{2 \nu}{\kappa^2} + \lambda_n}^{-\nu-\frac{d}{2}}
&
\rho_{\infty}(n) &= \frac{\sigma^2}{C_\infty} \exp\del{-\frac{\kappa^2}{2} \lambda_n}
&
n &\in \N
.
\>
This allows one to recover most tools used in spectral theory of GPs.
In particular, one can construct a regular Fourier feature approximation of the GPs by taking the top-$N$ eigenvalues, and writing
\< \label{eqn:ff_approx}
f(x)
&\approx
\sum\limits_{n=0}^{N-1}
\sqrt{\rho(n)}
w_n
f_n(x)
&
w_n &\~[N](0,1)
.
\>
Other kinds of Fourier feature approximations, such as random Fourier features, are also possible.
We now illustrate an example in which these expressions simplify.
\begin{example}[Sphere]
Take $M = \bb{S}^d$ to be the $d$-dimensional sphere.
Then we have
\[
k_\nu(x,x') = \sum_{n=0}^\infty c_{n,d}\,\rho_\nu(n)\, \c{C}^{(d-1)/2}_n \del[2]{\cos\del[1]{d_g(x,x')}}
\]
where $c_{n,d}$ are constants given in Appendix \ref{apdx:examples}, $\c{C}_n^{(\cdot)}$ are the Gegenbauer polynomials, $d_g$ is the geodesic distance, and $\rho_\nu(n)$ can be expressed explicitly in terms of $\lambda_n = n(n+d-1)$ using \eqref{eqn:matern-spectral-manifold}. 
See Appendix \ref{apdx:examples} for details on the corresponding Fourier feature approximation.
\end{example}

A derivation with further details can be found in Appendix \ref{apdx:examples}.
Similar expressions are available for many other manifolds, where the Laplace--Beltrami eigenvalues and eigenfunctions are known.

\subsection{Summary}

We conclude by summarizing the presented method of computing the kernel of Riemannian Mat\'{e}rn Gaussian processes defined by SPDEs.
The key steps are as follows.

\1 Obtain the Laplace--Beltrami eigenpairs for the given manifold, either analytically or numerically.
This step needs to be performed once in advance.
\2 Approximate the kernel using a finite truncation of the infinite sums \eqref{eqn:mani-matern-formula} or \eqref{eqn:mani-rbf-formula}.
\0 

This kernel approximation can be evaluated pointwise at any locations, fits straightforwardly into modern automatic differentiation frameworks, and is simple to work with.
The resulting truncation error will depend on the smoothness parameter $\nu$, dimension $d$, and eigenvalue growth rate, which is quantified by Weyl's law \cite{zelditch2017}.
For $\nu < \infty$ convergence will be polynomial, and for $\nu = \infty$ it will be exponential.
If $\sigma^2$ is trainable, the constant $C_\nu$ which normalizes the kernel by its average variance can generally be disregarded.
If Fourier feature approximations of the prior are needed, for instance, to apply the pathwise sampling technique of \textcite{wilson20}, they are given by \eqref{eqn:ff_approx}.

\section{Illustrated Examples} \label{sec:examples}

Here we showcase two examples to illustrate the theory: dynamical system prediction and sample path visualization.
We focus on simplified settings to present ideas in an easy-to-understand manner.

\subsection{Dynamical system prediction}

\begin{figure}
\begin{center}
\begin{subfigure}{.26\linewidth}
\centering
\includegraphics[trim=50 45 50 50,clip]{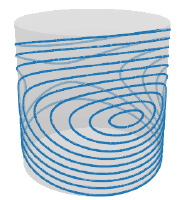}
\caption{Ground truth}
\label{fig:cylinder-ground-truth}
\end{subfigure}
\begin{subfigure}{.26\linewidth}
\centering
\includegraphics[trim=50 45 50 50,clip]{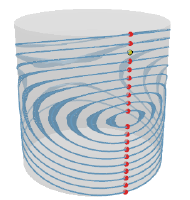}
\caption{Posterior 95\% intervals}
\label{fig:cylinder-learned-trajectories}
\end{subfigure}
\begin{subfigure}{.44\linewidth}
\centering
\includegraphics[scale=0.75,trim=0 4 0 12,clip]{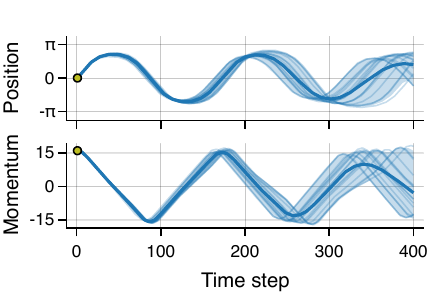}
\caption{Posterior samples for one trajectory}
\label{fig:cylinder-single-trajectory}
\end{subfigure}
\end{center}
\caption{Visualization of the dynamical system's learned phase diagram. 
Middle: we simulate 40 trajectories starting at the red dots, integrate the learned Hamilton's equations forward and backward in time until they approximately intersect other trajectories, and plot 95\% intervals in phase space.
Right: we simulate the trajectory beginning from the yellow dot, and plot mean and 95\% intervals.
}
\label{fig:cylinder}
\end{figure}

We illustrate how Riemannian squared exponential GPs can be used for predicting dynamical systems while respecting the underlying geometry of the configuration space the system is defined on.
This is an important task in robotics, where GPs are often trained within a model-based reinforcement learning framework \cite{deisenroth11,deisenroth13}. 
Here, we consider a purely supervised setup, mimicking the model learning inner loop of said framework.

For a prototype physical system, consider an ideal pendulum, whose configuration space is the circle $\bb{S}^1$, and whose phase space is the cotangent bundle $T^*\bb{S}^1$, which is isometric to the cylinder $\bb{S}^1 \x \R$ equipped with the product metric.
The equations of motion are given by Hamilton's equations, which are parameterized by the Hamiltonian $H : T^*\bb{S}^1 \-> \R$.
To learn the equations of motion from observed data, we place a GP prior on the Hamiltonian, with covariance given by a squared exponential kernel on the cylinder, defined as a product kernel of squared exponential kernels on the circle and real line.
Following \textcite{hensman13}, training proceeds using mini-batch stochastic variational inference with automatic relevance determination. 
The full setup is given in Appendix \ref{apdx:experiments}.

To generate trajectories from the learned equations of motion, following \textcite{wilson20}, we approximate the prior GP using Fourier features, and employ \eqref{eqn:pathwise} to transform prior sample paths into posterior sample paths.
We then generate trajectories by solving the learned Hamilton's equations numerically for each sample, which is straightforward because the approximate posterior is a basis function approximation and therefore easily differentiated in the ordinary deterministic manner.
Results can be seen in Figure \ref{fig:cylinder}.
From these, we see that our GP learns the correct qualitative behavior of the equations of motion, mirroring the results of \textcite{deisenroth11}.

\subsection{Sample path visualization}

To understand how complicated geometry affects posterior uncertainty estimates and illustrate the techniques on a general Riemannian manifold, we consider a posterior sample path visualization task.
We take $M$ to be the \emph{dragon} manifold from the Stanford 3D scanning repository, modified slightly to remove components not connected to the outer surface.
We represent the manifold using a $202490$-triangle mesh and obtain 500 Laplace--Beltrami eigenpairs numerically using the \emph{Firedrake} package \cite{rathgeber16}.

For training data, we introduce a ground truth function by fixing a distinguished point at the end of the dragon's snout, and compute the sine of the geodesic distance from that point.
We then observe this function at $52$ points on the manifold chosen from the mesh's nodes, and train a Mat\'{e}rn GP regression model with smoothness $\nu = 3/2$ by maximizing the marginal likelihood with respect to the remaining kernel hyperparameters.
By using the path-wise sampling expression \eqref{eqn:pathwise}, we obtain posterior samples defined on the entire mesh.

Results can be seen in Figure \ref{fig:dragon}.
Here, we see that posterior mean and uncertainty estimates match the manifold's shape seamlessly, decaying roughly in proportion with the geodesic distance in most regions.
In particular, we see that the two sides of the dragon's snout have very different uncertainty values, despite close Euclidean proximity.
This mimics the well-known \emph{swiss roll} example of manifold learning \cite[Section 6.1.1]{lee2007}, and highlights the value of using a model which incorporates geometry.

\begin{figure}
\begin{minipage}{0.01\textwidth}
\begin{subfigure}{\linewidth}
  \centering
  \includegraphics[width=\linewidth]{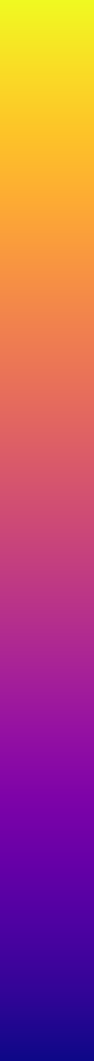}  
  \caption*{} 
  \label{fig:dragon-colorbar}
\end{subfigure}    
\end{minipage}
\begin{minipage}{.95\textwidth}
\begin{subfigure}{.24\linewidth}
  \centering
  \includegraphics[width=\linewidth, trim=0 0 0 0, clip]{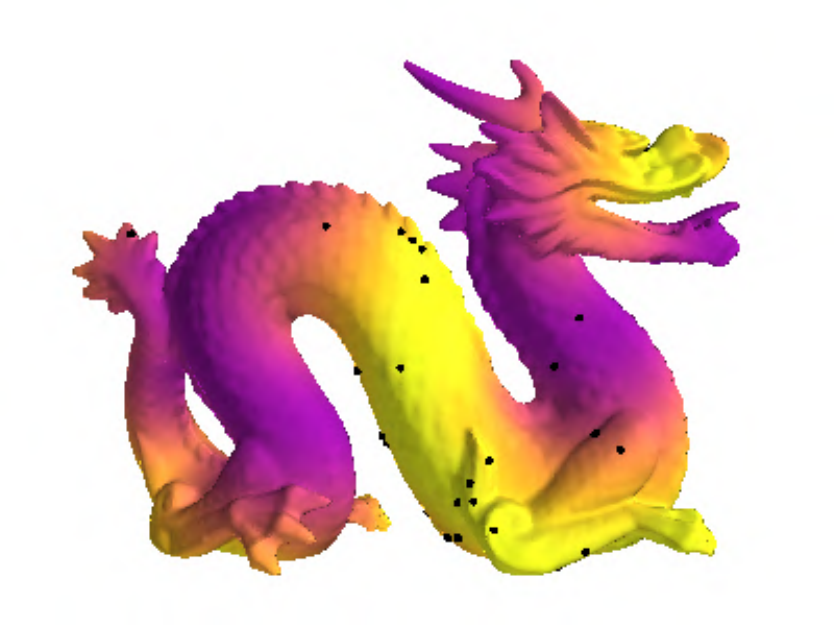}  
  \caption{Ground truth}
  \label{fig:dragon-ground-truth}
\end{subfigure}
\begin{subfigure}{.24\linewidth}
  \centering
  \includegraphics[width=\linewidth, trim=0 0 0 0, clip]{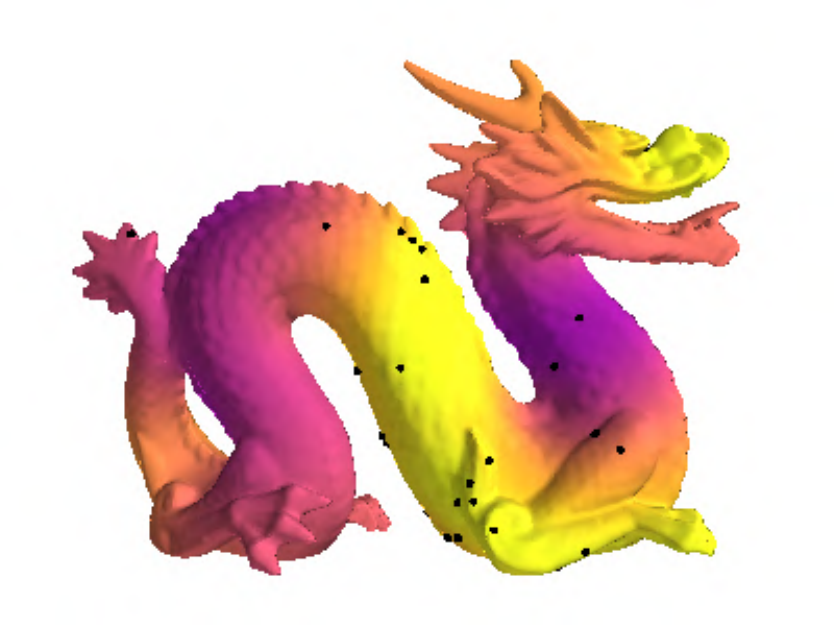}  
  \caption{Mean}
  \label{fig:dragon-mean}
\end{subfigure}
\begin{subfigure}{.24\linewidth}
  \centering
  \includegraphics[width=\linewidth, trim=0 0 0 0, clip]{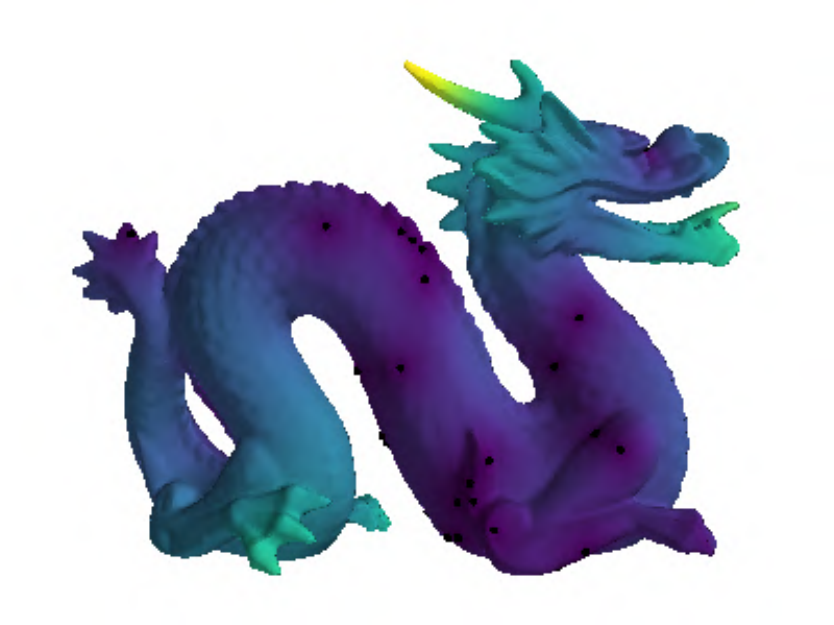}  
  \caption{Standard deviation}
  \label{fig:dragon-sample-0}
\end{subfigure}
\begin{subfigure}{.24\linewidth}
  \centering
  \includegraphics[width=\linewidth, trim=0 0 0 0, clip]{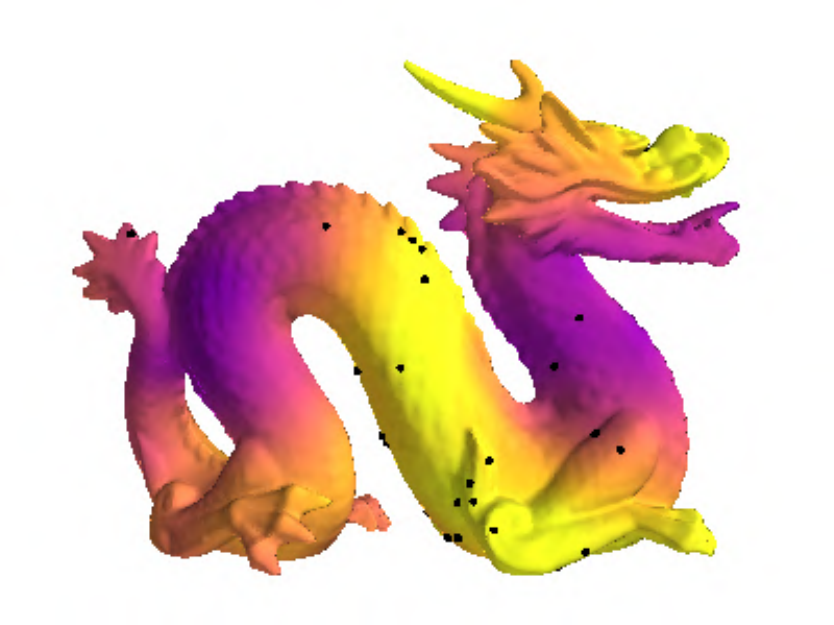}
  \caption{One posterior sample}
  \label{fig:dragon-sample-1}
\end{subfigure}
\end{minipage}
\begin{minipage}{0.01\textwidth}
\begin{subfigure}{\linewidth}
  \centering
  \includegraphics[width=\linewidth]{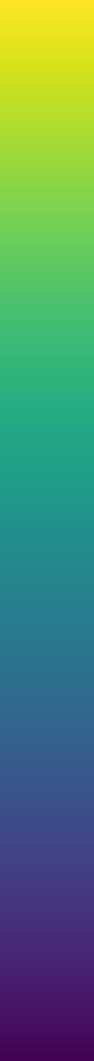}  
  \caption*{} 
  \label{fig:dragon-colorbar-right}
\end{subfigure}    
\end{minipage}
\caption{Visualization of a Mat\'{e}rn Gaussian process posterior on the dragon.
We plot the true function values, posterior mean, marginal posterior variance, and one posterior sample evaluated on the entire mesh.
Here, black dots denote training locations, and color represents value of the corresponding functions.
Additional posterior samples can be seen in Appendix \ref{apdx:experiments}.
}
\label{fig:dragon}
\end{figure}

\section{Conclusion}

In this work, we developed techniques for computing the kernel, spectral measure, and Fourier feature approximation of Mat\'{e}rn and squared exponential Gaussian processes on compact Riemannian manifolds, thereby constructively generalizing standard Gaussian process techniques to this setting.
This was done by viewing the Gaussian processes as solutions of stochastic partial differential equations, and expressing the objects of interest in terms of Laplace--Beltrami eigenvalues and eigenfunctions.
The theory was demonstrated on a set of simple examples: learning the equations of motion of an ideal pendulum, and sample path visualization for a Gaussian process defined on a dragon.
This illustrates the theory in settings both where Laplace--Beltrami eigenfunctions have a known analytic form, and where they need to be calculated numerically using a differential equation or graphics processing framework.
Our work removes limitations of previous approaches, allowing Mat\'{e}rn and squared exponential Gaussian processes to be deployed in mini-batch, online, and non-conjugate settings using variational inference.
We hope these contributions enable practitioners in robotics and other physical sciences to more easily incorporate geometry into their models.

\section*{Broader Impact}

This is a purely theoretical paper. 
We develop technical tools that make Mat\'{e}rn Gaussian processes easier to work with in the Riemannian setting.
This enables practitioners who are not experts in stochastic partial differential equations to model data that lives on spaces such as spheres and tori.

We envision the impact of this work to be concentrated in the physical sciences, where spaces of this type occur naturally.
Since the state spaces of most robotic arms are Riemannian manifolds, we expect these ideas to improve performance of model-based reinforcement learning by making it easier to incorporate geometric prior information into models.

Since climate science is concerned with studying the globe, we also expect that our ideas can be used to model environmental phenomena, such as sea surface temperatures.
By employing Gaussian processes for data assimilation and building them into larger frameworks, this could facilitate more accurate climate models compared to current methods.

These impacts carry forward to potential generalizations of our work.
We encourage practitioners to consider impacts on their respective disciplines that arise from incorporating geometry into models.

\section*{Acknowledgments and Disclosure of Funding}

VB was supported by the St. Petersburg Department of Steklov Mathematical Institute of Russian Academy of Sciences and by the Ministry of Science and Higher Education of the Russian Federation, agreement N\textsuperscript{\underline{o}} 075-15-2019-1620.
PM was supported by the Ministry of Science and Higher Education of the Russian Federation, agreement N\textsuperscript{\underline{o}} 075-15-2019-1619.
VB and PM were supported by "Native towns", a social investment program of PJSC Gazprom Neft, and by the Department of Mathematics and Computer Science of St. Petersburg State University.
AT was supported by the Department of Mathematics at Imperial College London.

\printbibliography

\newpage
\appendix

\section{Additional experimental details}
\label{apdx:experiments}

\subsection*{Sample path visualization}

Here, we further explore the example of a Gaussian process on the dragon manifold. 
Figure \ref{fig:dragon_appendix} presents nine additional samples from Gaussian process posterior. 
Note that the we change the color palette in order to cover samples' value range. 
We repeat the first row of Figure \ref{fig:dragon_appendix} with the new color palette.

To define a Mat\'{e}rn kernel on the dragon manifold, we employ a re-parametrized version of  \eqref{eqn:mani-matern-formula}, which is
\[
    k_{\nu}(x, x') = \sigma^2\sum_{n=0}^\infty \del{\frac{1}{\kappa^2} + \lambda_n}^{-\nu-\frac{d}{2}} f_n(x)f_n(x')
    .
\]
Here, we need to compute eigenvalues and eigenfunctions of the Laplace--Beltrami operator. 
We do so using the \emph{Galerkin finite element method} (FEM), and approximate the manifold as a triangular mesh with $K=100179$ vertices.
This involves solving a Helmholtz equation, which is a far easier problem than solving the SPDE \ref{eqn:spde-matern}, since, among other reasons, the equation is a standard deterministic second-order linear PDE that only needs to be solved once, rather than once-per-sample.
Each vertex $v_k$, $k = 1, \ldots, K$ is associated with a piecewise-linear basis function $\phi_k$, such that $\phi_k(v_l) = \delta_{kl}$ where $\delta$ is the Kronecker delta.
This leads to a \emph{discrete Laplace-Beltrami operator} $\lap$, as a discretization of the Laplace-Beltrami $\lap_g$ on the manifold.
Finally, the eigenproblem is stated as follows: find $\lambda_n$, $f_n$,  such that
\<
    \label{eqn:discrete-laplace-eigenproblem}
    \innerprod{\lap f_n}{\phi_k} &= \lambda_n \innerprod{f_n}{\phi_k}
    &
    k &= 1, \ldots, K
\>
where we regard the functions $f_n$, $\phi_k$ as $K$-dimensional vectors: $(f_n)_k = f_n(v_k)$. 
Since the resulting eigenproblem \eqref{eqn:discrete-laplace-eigenproblem} is finite-dimensional, it can be solved using standard numerical approaches. 
Note that the discrete Laplacian cannot have more than $K$ eigenvalues. In our experiments, we use the Firedrake software package \cite{rathgeber16}, which provides a high-level domain-specific language for computing discrete Laplacians and related tasks.
We use Arnoldi method with shift-invert spectral transform to compute the first $N=500$ (smallest magnitude) eigenvalues and corresponding eigenfunctions on the dragon mesh, using numerical routines from the PETSc package, which Firedrake calls. 
This allows us to approximate the formula \eqref{eqn:mani-matern-formula} with the first $N$ components of the sum, given by
\[
    k_{\nu}(x, x') \approx \sigma^2\sum_{n=0}^N \del{\frac{1}{\kappa^2} + \lambda_n}^{-\nu-\frac{d}{2}} f_n(x)f_n(x').
\]

The formula also leads to a Fourier approximation of the prior:
\<
    f(x) &\approx \sigma \sum_{n=0}^N w_n \del{\frac{1}{\kappa^2} + \lambda_n}^{-\frac{\nu}{2}-\frac{d}{4}} f_n(x)
    &
    w_n &\~[N](0,1).
\>

Once these expressions are obtained, standard GP training techniques are utilized to compute the posterior distribution using path-wise sampling, given by equation \eqref{eqn:pathwise}. In the experiments we set the smoothness parameter $\nu$ to be $3/2$, and Gaussian noise variance to be $10^{-15}$.
We obtain $\sigma^2$ and $\kappa$ by gradient descent optimization of the marginal likelihood.

\newpage

\begin{figure}[t]
\begin{minipage}{\textwidth}
\begin{minipage}{0.01\textwidth}
\begin{subfigure}[t]{\linewidth}
  \centering
  \includegraphics[width=\linewidth]{Figures/plasma.pdf}  
  \caption*{} 
\end{subfigure}    
\end{minipage}
\begin{minipage}{.97\textwidth}
\begin{subfigure}{.32\linewidth}
  \centering
  \includegraphics[width=\linewidth, trim=0 0 0 0, clip]{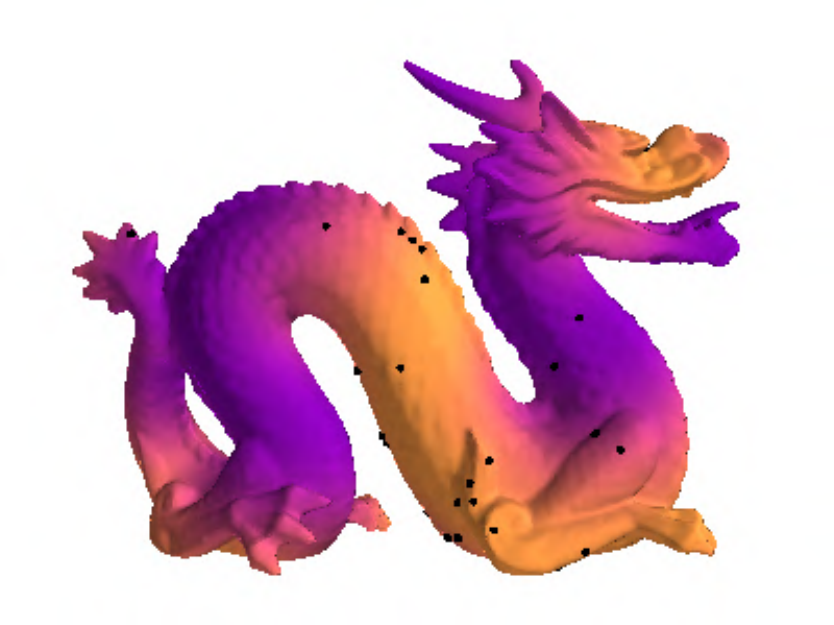}  
  \caption{Ground truth}
\end{subfigure}
\begin{subfigure}{.32\linewidth}
  \centering
  \includegraphics[width=\linewidth, trim=0 0 0 0, clip]{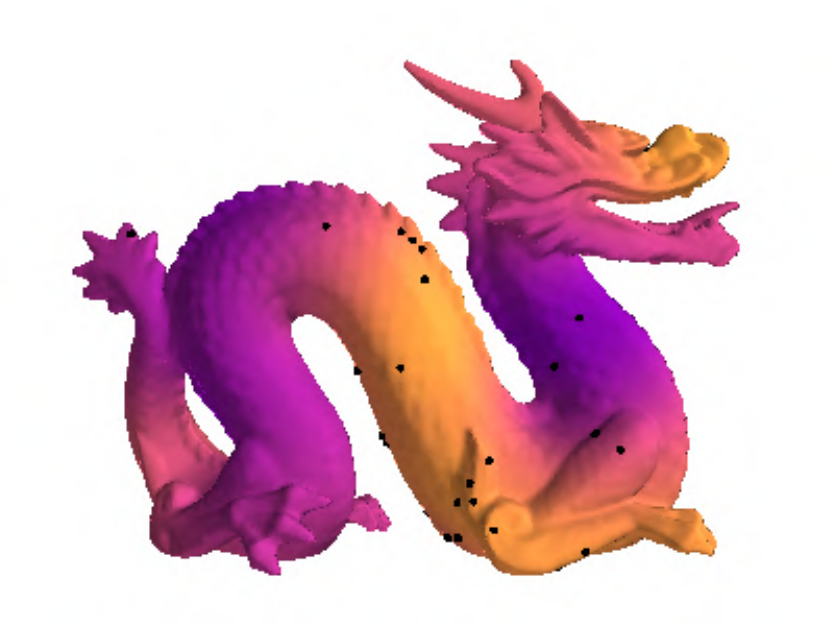}  
  \caption{Mean}
\end{subfigure}
\begin{subfigure}{.32\linewidth}
  \centering
  \includegraphics[width=\linewidth, trim=0 0 0 0, clip]{Figures/dragon/1_standard_deviation.pdf}  
  \caption{Standard deviation}
\end{subfigure}
\end{minipage}
\begin{minipage}{0.01\textwidth}
\begin{subfigure}{\linewidth}
  \centering
  \includegraphics[width=\linewidth]{Figures/viridis.pdf}  
  \caption*{} 
\end{subfigure}    
\end{minipage}
\end{minipage}
\newline
\includegraphics[width=.32\linewidth]{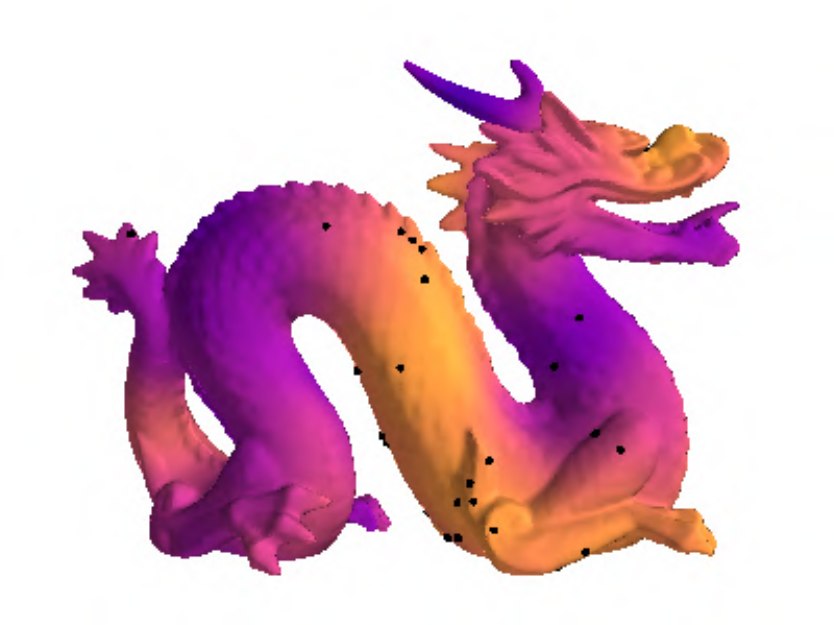}
\includegraphics[width=.32\linewidth]{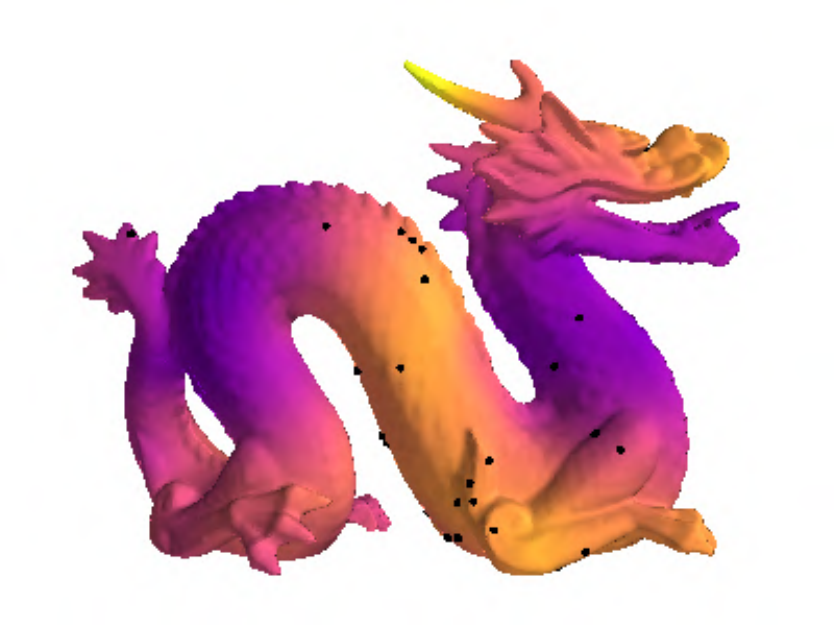}
\includegraphics[width=.32\linewidth]{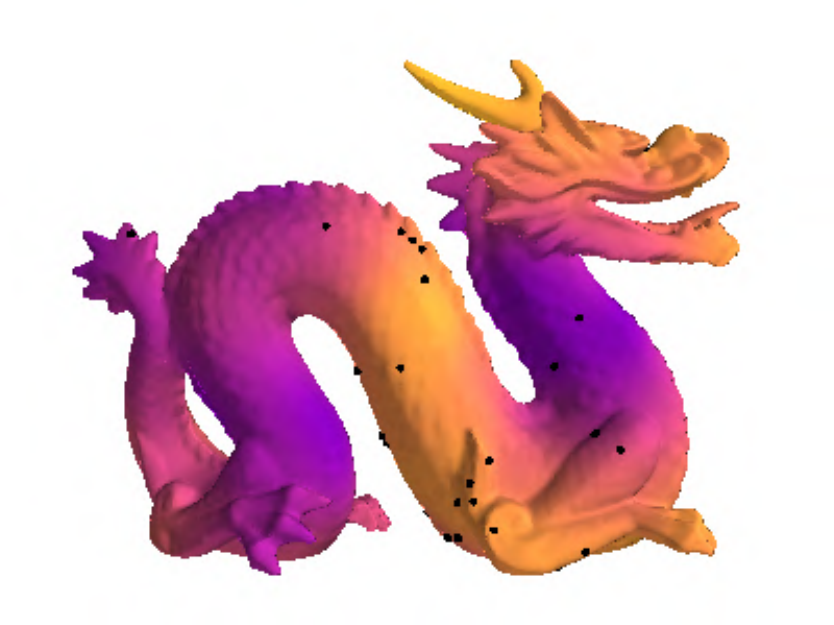}
\newline
\includegraphics[width=.32\linewidth]{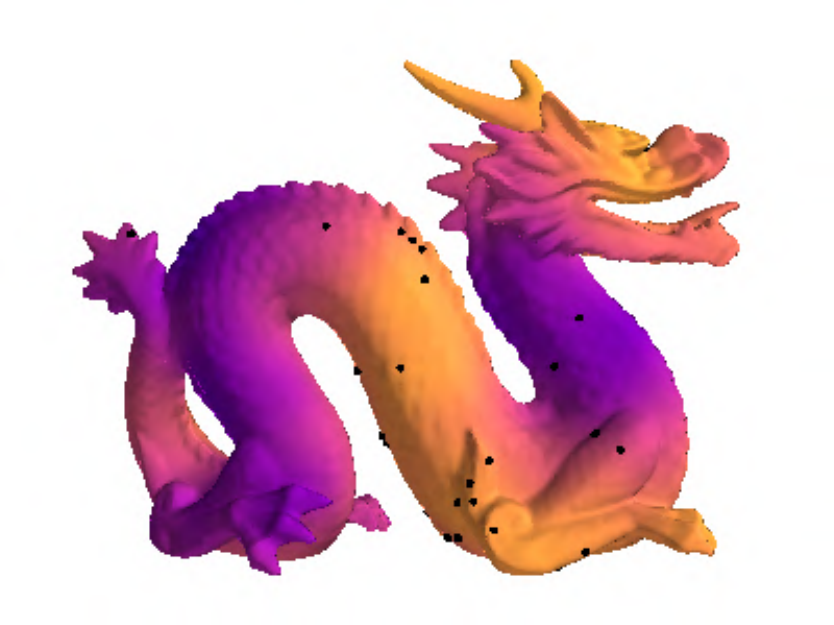}
\includegraphics[width=.32\linewidth]{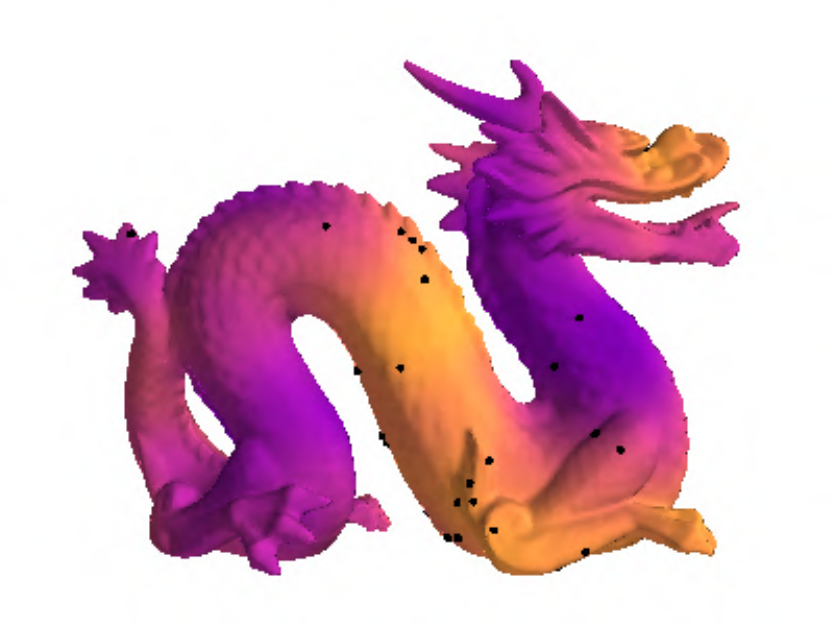}
\includegraphics[width=.32\linewidth]{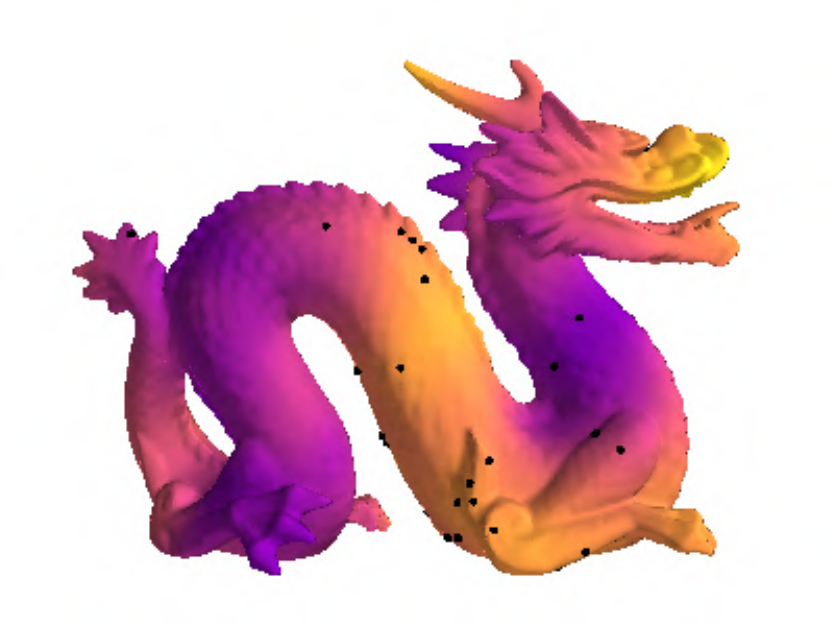}
\newline
\includegraphics[width=.32\linewidth]{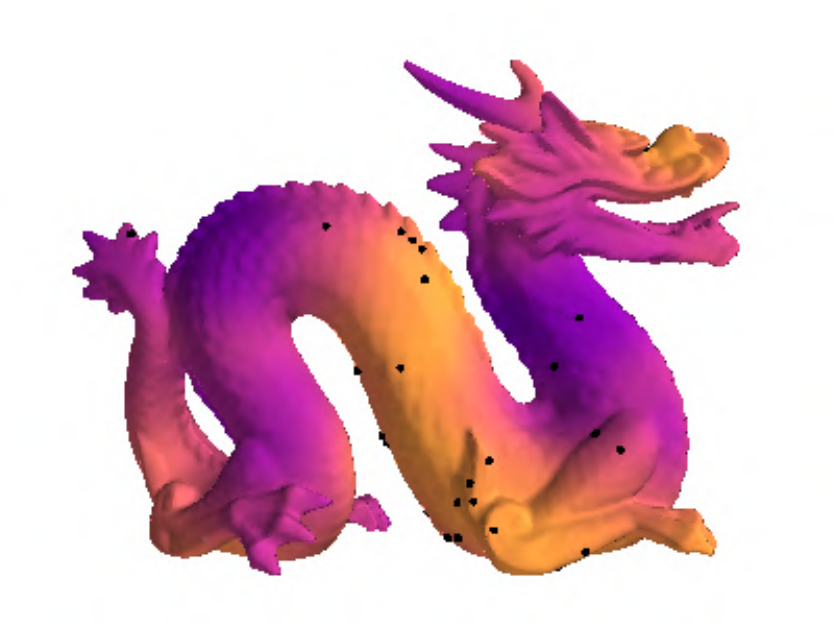}
\includegraphics[width=.32\linewidth]{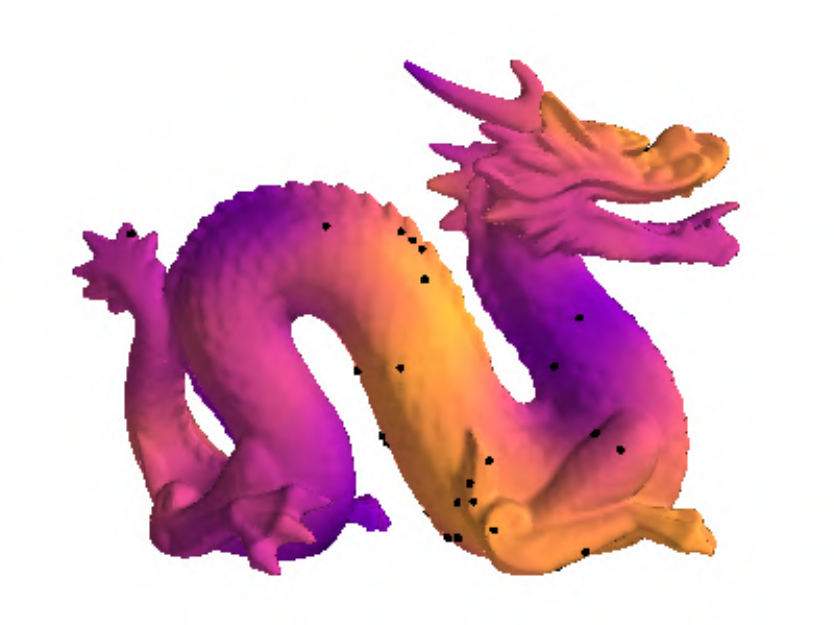}
\includegraphics[width=.32\linewidth]{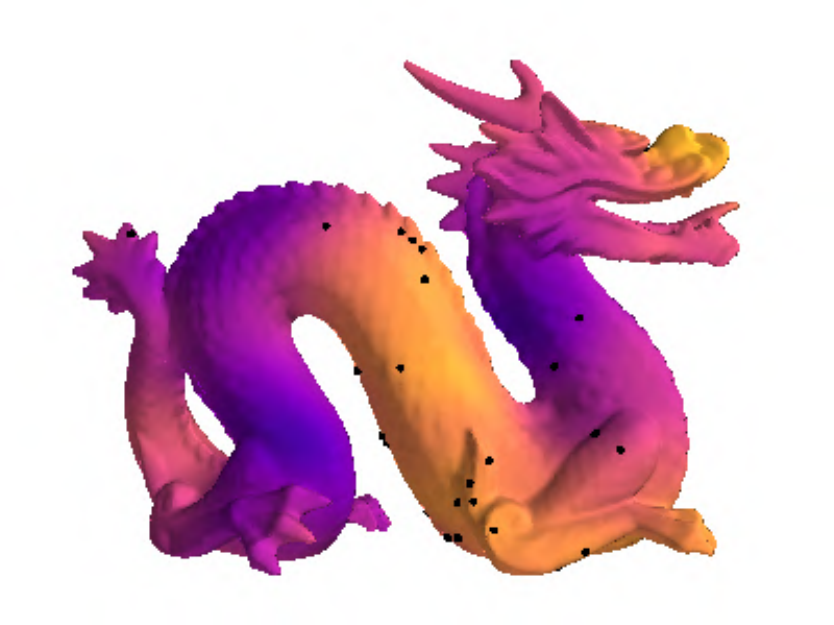}
\caption{Visualization of a Mat\'{e}rn Gaussian process posterior on the dragon.
We plot the true function values, posterior mean, marginal posterior standard deviation, and nine random function draws from the posterior.
Here, black dots denote training locations, and color represents value of the corresponding functions.
The color palette is changed slightly compared to Figure \ref{fig:dragon} in order to represent the range of the samples more effectively.
}
\label{fig:dragon_appendix}
\end{figure}

\clearpage

\subsection*{Dynamical system prediction}

Here we describe the setup in the dynamical systems predictions experiment in more detail. 
Our system is an ideal pendulum, parameterized by angle and angular momentum which lie on the cylinder~$\bb{S}^1 \x \R^1$, and which we denote by $(\theta, p_\theta)$.
The true equations of motion are given by Hamilton's equations
\<
\dot\theta &= \pd{H}{p_\theta}
&
\dot{p} &= - \pd{H}{\theta}
\>
with
\[
H(\theta, p_\theta) = \frac{p_\theta^2}{2 m l^2} + mgl(1-\cos(\theta))
\]
where $(m,g,l)$ are the mass, gravitational constant, and length of the pendulum.
We set $m = 1$, $g = 9.8$, $l = 2$.

Training data is obtained as follows.
We do not observe the Hamiltonian: instead, we observe its partial derivative pairs $(\pd{H}{\theta}, \pd{H}{p_\theta})$.
In a reinforcement learning setting, following \textcite{deisenroth11}, these can be obtained by backward integration of observed trajectories.
In our simplified setting, we generate training data by computing said partial derivatives at random locations, sampled uniformly on the rectangle $(0,2\pi) \x (-20,20)$, generating $1000$ total training points.

To obtain the model, we first place a Gaussian process prior directly on the Hamiltonian.
To ensure that (a) $\theta$ is supported on $[0,2\pi)$, rather than $[0,1)$, and (b) a unified random Fourier feature expansion for the prior is possible for both the $\theta$ and $p_\theta$ components, we use a re-parameterized form for the kernel and spectral measure.
These are given by
\<
k_\theta(\theta,\theta') &= \sum_{n\in2\pi\Z^d} \exp\del{-\norm{\frac{\theta - \theta' + n}{2\pi \sqrt{2}\kappa}}^2} = \sum_{n\in2\pi\Z^d} \exp\del{-\norm{\frac{\theta - \theta' + n}{\kappa_\theta}}^2}
\\
\rho_\theta(n) &= \sqrt{2\pi} 2^{-3/2}\pi^{-1}\kappa_\theta \exp(-2\pi^2 2^{-3}\pi^{-2}\kappa_\theta^2n^2) = 2^{-1} \pi^{-1/2} \kappa_\theta \exp(-2^{-2}\kappa_\theta^2n^2)
\>
which from the kernel and spectral measure introduced in the manuscript by defining $\theta = 2\pi x, \theta' = 2\pi x'$, and $\kappa_\theta = 2^{3/2}\pi\kappa$ so that $\kappa = 2^{-3/2}\pi^{-1}\kappa_\theta$.
For the $p_\theta$ component, we use the re-normalized squared exponential kernel
\[
k_{p_\theta}(p_\theta, p_\theta') = \exp\del{- \norm{\frac{p_\theta - p_\theta'}{\kappa_{p_\theta}}}^2}
\]
and denote the corresponding spectral measure by $\rho_{p_\theta}$. 
The full kernel on $(\theta, p_\theta)$ is given by 
\<
k\del[1]{(\theta,p_\theta),(\theta',p_\theta')} &= \sigma^2 k_\theta(\theta,\theta')k_{p_\theta}(p_\theta, p_\theta')
.
\>
Similarly, denote the full spectral measure over $\Z \x \R$ by $\rho$.
Sampling from the posterior is performed by sampling from the prior using a random Fourier feature approximation and transforming the resulting draws into posterior draws using \eqref{eqn:pathwise}.

Unfortunately, since the spectral measure for this kernel is the product of a discrete measure for the $\theta$ component, and absolutely continuous measure for the $p_\theta$ component, the resulting optimization objective is not (automatically) differentiable with respect to $\kappa_{p_\theta}$.
To enable use of automatic relevance determination, we develop an importance-sampling-based \emph{reparametrization trick} by employing the generalized random Fourier feature expansion
\[
f(\theta,p_\theta) \approx \sigma \sqrt{\frac{2}{\ell}} \sum_{j=1}^\ell \gamma_j w_j \cos\del{\innerprod{\v\omega_j}{\frac{(\theta,p_\theta)}{\v\lambda}} + \beta_j}
\]
where division by $\v\lambda = (1,\kappa_{p_\theta})$ is performed element-wise, and
\<
\v\omega_j &\~ \widehat\rho
&
\beta_j &\~[U](0,2\pi)
&
w_j &\~[N](0,1)
\>
where $\widehat\rho$ is the \emph{standard spectral measure}, which is equal to $\rho$ except with $\kappa_\theta$ and $\kappa_{p_\theta}$ fixed to reference values, in our case $\kappa_\theta = \kappa_{p_\theta} = 1$.
The importance weights $\gamma_j$ are given by
\[
\gamma_j = \sqrt{\frac{\rho_\theta(\omega_{j\theta}) / C_\theta}{\widehat\rho_\theta(\omega_{j\theta}) / \widehat{C}_\theta}} = \sqrt{\frac{\rho_\theta(\omega_{j\theta}) \widehat{C}_\theta}{\widehat\rho_\theta(\omega_{j\theta}) C_\theta}}
\]
where $\omega_{j\theta}$ is the $\theta$-component of $\v\omega_j$ and $C_\theta = \sum_{n\in\Z}\rho_\theta(n)$ and $\widehat{C}_\theta = \sum_{n\in\Z}\widehat\rho_\theta(n)$ are their respective normalizing constants.
Using this more general random Fourier feature approximation, the training objective becomes differentiable with respect to $\kappa_{p_\theta}$.

Since we do not observe the Hamiltonian, but rather its partial derivatives $(\pd{H}{\theta}, \pd{H}{p_\theta})$, as our full model we employ the \emph{gradient} of the Gaussian process developed above, which yields a vector-valued Gaussian process.
The kernel of said process is obtained by differentiating the kernels above.

To complete the model, we now introduce the inducing point approximation.
We use a total of 35 vector-valued inducing points, which are initialized on an evenly-spaced grid over the domain of the training data. 
For the prior approximation, we use a total of 128 random Fourier features.

Following \textcite{titsias09a} and \textcite{hensman13}, training proceeds by minimizing Kullback-Leibler divergence between the inducing point GP and the true posterior GP. 
We optimize the inducing points, inducing covariance, and all model hyperparameters.
For the loss, in addition to the KL divergence, we include $\ell^2$ regularization terms corresponding to log-normal hyperpriors for the hyperparameters.
For the kernel, these are given as $\sigma^2 \~[LN](0,1)$, $\kappa_\theta \~[LN](0,1)$ and $\ln\kappa_{p_\theta} \~[LN](1.5,1)$.
For the GP, the error variance hyperprior $\tau^2 \~[LN] (10^{-12}, 1)$.
All parameters are initialized at their hyperprior's mean.
The jitter term is set to $\varsigma = 10^{-5}$.

Optimization is performed by the ADAM algorithm, with learning rate set to $\eta = 0.01$ and default values for the other hyperparameters.
We use a mini-batch size of 128, and train until convergence.

To generate trajectories of the dynamical system under the learned Hamiltonian, following \textcite{wilson20}, we use \eqref{eqn:pathwise} to draw a set of basis coefficients from the posterior distribution, and form a basis function approximation of our posterior GP.
We plug this function back into Hamilton's equations, and solve them numerically by employing a St\"{o}rmer-Verlet integrator.
The step size is tuned for each initial condition to ensure all trajectories in Figure \ref{fig:cylinder} cross each other on the rear side of the cylinder at approximately the same time when using the true Hamiltonian after 50 time steps, and range from $0.02$ to $0.031$.
These step sizes are then used to produce error bars for the learned Hamiltonian.

To generate the error bars on the cylinder Figure \ref{fig:cylinder}, we first compute the mean trajectory under the GP model for each time step.
Then, for each time step, we project the trajectories onto the tangent plane on the cylinder located at the mean, using the cross product identity.
In this tangent plane, we then project the trajectory points onto a line perpendicular to the tangent vector pointing in the direction of the mean trajectory obtained by backwards integration.
We calculate 95\% intervals over this line, and plot them projected back from the tangent plane onto the surface of the cylinder.
The error bars for the positions and momenta of the distinguished trajectory on the right-hand-side of Figure \ref{fig:cylinder}, which are not plotted on the surface of the cylinder, are obtained by re-parameterizing $\vartheta = ((\theta + \pi) \mod 2\pi) - \pi$ to ensure $\vartheta \in [-\pi,\pi)$, and calculating 2.5\% and 97.5\% quantiles in the standard way.

\section{Additional examples and expressions} \label{apdx:examples}

\subsection*{Circle}
Here we discuss closed-form expressions for Mat\'{e}rn and squared exponential kernels on circle~$\bb{S}^1 = \bb{T}$. These kernels are given in \eqref{eqn:matern-torus} and \eqref{eqn:rbf-torus} respectively, with $d=1$ in our setting.
Applying the generalized Poisson summation formula \cite[Chapter VIII]{stein2016} to these expressions gives
\<
&k_{\nu}(x, x')
=
\sum\limits_{n \in \Z}
\frac{S_{\nu}(n)}{C'_\nu}
e^{2 \pi i n (x - x')}
,
&
&k_{\infty}(x, x')
=
\sum\limits_{n \in \Z}
\frac{S_{\infty}(n)}{C'_\infty}
e^{2 \pi i n (x - x')}
,
\>
where $S_\nu$ and $S_{\infty}$ are precisely the spectral densities of the standard Mat\'{e}rn and squared exponential kernels over $\R$. 
The specific formulas for $S_\nu, S_{\infty}$ are given in \textcite[Section 4.2.1]{rasmussen06}:
\<
S_{\nu}(\xi)
&=
\sigma^2
\frac{
    2
    \pi^{\frac{1}2}
    \Gamma(\nu+\frac{1}2)
    (2\nu)^\nu
}{
    \Gamma(\nu)\kappa^{2\nu}
}
\left(
    \frac{2\nu}{\kappa^2}
    +
    4\pi^2 \xi^2
\right)^{-\left(\nu+\frac{1}2\right)}
,
\\
S_{\infty}(\xi)
&=
\sigma^2
(2 \pi \kappa^2)^{1/2}
e^{-2 \pi^2 \kappa^2 \xi^2}
,
\>
where the cumbersome constants ensure that the original GP over $\R$ has variance equal to $\sigma^2$.
Periodic summation does not preserve variance, thus requiring additional constants $C'_{(\cdot)}$ to recover variance $\sigma^2$.
This makes the original constants redundant, so we instead consider
\<
\tilde{k}_{\nu}(x, x')
&=
\sum\limits_{n \in \Z}
\left(
    \frac{2\nu}{\kappa^2}
    +
    4\pi^2 n^2
\right)^{-\left(\nu+\frac{1}2\right)}
e^{2 \pi i n \cdot (x - x')}
,
\\
\tilde{k}_{\infty}(x, x')
&=
\sum\limits_{n \in \Z}
e^{-2 \pi^2 \kappa^2 n^2}
e^{2 \pi i n \cdot (x - x')}
.
\>
For $\nu = \infty$ the right-hand side is precisely one of the classical Jacobi theta functions, $\vartheta_3(z, q)$ (see definition in \textcite[equation 16.27.3]{abramowitz1972}), with parameters $z = \pi (x - x')$ and $q = \exp(-2 \pi^2 \kappa^2)$, giving
\[
\tilde{k}_{\infty}(x, x')
=
\vartheta_3(\pi(x - x'), \exp(-2 \pi^2 \kappa^2))
.
\]
To obtain $k_\infty$ from $\tilde{k}_\infty$ we need to find $C_\infty$ such that $\tilde{k}_\infty(x,x) / C_\infty = \sigma^2$. 
Obviously, $C_\infty = \tilde{k}_\infty(x,x)/ \sigma^2$, where the right hand side does not depend on $x$, so the constant is well-defined. Hence
\[
k_{\infty}(x, x')
=
\frac{\sigma^2}{\vartheta_3(0, \exp(-2 \pi^2 \kappa^2))} \vartheta_3(\pi(x - x'), \exp(-2 \pi^2 \kappa^2))
.
\]
Returning to the periodic summation of the original normalized kernel, we obtain
\[
C'_\infty = C_\infty \sigma^2 (2 \pi \kappa^2)^{1/2} = \vartheta_3(0, \exp(-2 \pi^2 \kappa^2)) (2 \pi \kappa^2)^{1/2}
.
\]

Summarizing, we obtain the following.
\begin{example}[Squared exponential kernel on $\bb{S}^1$] The squared exponential kernel, normalized to have variance $\sigma^2$, and the corresponding spectral density, are given by
\<
k_{\infty}(x, x')
&=
\frac{\sigma^2}{\vartheta_3(0, \exp(-2 \pi^2 \kappa^2))}
\vartheta_3(\pi(x - x'), \exp(-2 \pi^2 \kappa^2))
,
\\
\rho_{\infty}(n)
&=
\frac{\sigma^2}{\vartheta_3(0, \exp(-2 \pi^2 \kappa^2))}
\exp(-2 \pi^2 \kappa^2 n^2), \qquad n \in \Z
.
\>
\end{example}

Now we turn our attention to kernels $k_{\nu}$. 
After an appropriate \textit{mutatis mutandis} applied to the closed-form of the Fourier series 
\[
\frac{ \alpha \sinh(\alpha \pi) } { \pi }
\sum\limits_{k=-\infty}^{\infty} \frac{ \exp(i k \theta) } { (\alpha^2 + k^2)^n }
\]
provided in the supplementary material of \textcite{guinness16} we get the following.

\begin{example}[Mat\'{e}rn kernel on $\bb{S}$ for half-integer $\nu$]
Let $\nu = 1/2 + s$, $s \in \N$. 
The Mat\'{e}rn kernel, normalized to have variance $\sigma^2$, and the corresponding spectral density, are given by
\<
k_{\nu}(x, x^\prime)
&=
    \frac{\sigma^2}{C_\nu}
    \sum_{k=0}^{s} a_{s,k} \left( \sqrt{2\nu} \cdot \frac{\abs{x-x'} - 1/2}{\kappa} \right)^{k}
        \hyp^{k} \left( \sqrt{2\nu} \cdot \frac{\abs{x-x'} - 1/2}{\kappa} \right)
\\
\rho_{\nu}(n)
&=
\frac{
    2 \sigma^2
    \sqrt{2\nu}
    \sinh \left(\frac{\sqrt{2\nu}}{2 \kappa} \right)
    }
    {C_\nu
    (2\pi)^{1 - 2 \nu}
    \kappa
    }
\del{\frac{2\nu}{\kappa^2} + 4\pi^2 n^2 }^{-\nu-1/2}
, \qquad n \in \Z
\>
where various components of the expression are defined as follows.
\1 $\hyp^{k}(\cdot)$ is defined as $\cosh(\cdot)$ when $k$ is even and $\sinh(\cdot)$ when $k$ is odd.
\2 $C_\nu$ is chosen so that $k_\nu(x, x) = \sigma^2$.
\3 $a_{s,k}$ are constants defined as follows, following a modification of the derivation given by \textcite{guinness16}.
\1 First, for the special case $k=s$, define
\<
a_{s,s} = \left( \del{-\frac{\nu}{\pi^2 \kappa^2}}^{s} (s)! \right)^{-1}
.
\>
\2 Next, define the constants $h_{rk}$ as 
\[
h_{rk} = \sum_{j=0}^{2r+1} \binom{2r+1}{j} (k)_j 
    \left(\frac{\sqrt{2\nu}}{2 \kappa}  \right)^{k-j} \cdot \hyp^{k-j+1} \left(
            \frac{\sqrt{2\nu}}{2 \kappa} \right)           
\]
for $r = 0, \ldots, s-1$ and $k = 0, \ldots, s$, where $(k)_j$ is the falling factorial
\[
(k)_j =
\begin{cases}
1, &\t{when} j = 0, \\
0, &\t{when} j > k, \\
k (k-1) \dots (k-j+1), &\t{otherwise.}
\end{cases}
\]
\3 Finally, define the matrix 
\<
\mathbf{H}_s = (h_{rk})_{r = 0, \ldots, s-1}^{k = 0, \ldots, s-1}
,
\>
and a vector $\mathbf{h}_s = [h_{0,s}, \ldots h_{s-1,s}]^\top$. 
Then the remaining constants $a_{s,k}$ for $k\neq s$ are given as
\<
[a_{s,0}, \ldots, a_{s,s-1}]^\top = -a_{s,s} \mathbf{H}_s^{-1} \mathbf{h}_{s}
.
\>
\0
\0 

For $\nu = 1/2$ the above formulae reduce to
\<\label{eq:matern12_circle}
k_{1/2}(x, x')
&=
\frac{\sigma^2}{\cosh \del{ \frac{1}{2 \kappa} }}
\cosh \del{ \frac{\abs{x-x'} - 1/2}{\kappa} }
,
\\
\rho_{1/2}(n)
&=
\frac{
    \sigma^2
    2
    \sinh \left(\frac{1}{2 \kappa} \right)
    }
    {
    \kappa
    \cosh \del{ \frac{1}{2 \kappa} }
    }
\del{\frac{1}{\kappa^2} + 4\pi^2 n^2 }^{-1}
.
\>
For $\nu = 3/2$,
\<
k_{3/2}(x, x')
&=
\frac{\sigma^2}{C_{3/2}}
\del{
    \frac{\pi^2 \kappa}{3}
    \del{
        2 \kappa
        +
        \sqrt{3}
        \coth \del{
            \frac{\sqrt{3}}{2 \kappa}
        }
        }
    \cosh \del{u}
    -
    \frac{2 \pi^2 \kappa^2}{3}
    u
    \sinh \del{u}
}
,
\\
\rho_{3/2}(n)
&=
\frac{\sigma^2}{C_{3/2}}
\frac{
    2
    \sqrt{3}
    \sinh \left(\frac{\sqrt{3}}{2 \kappa} \right)
    }
    {
    (2\pi)^{-2}
    \kappa
    }
\del{\frac{3}{\kappa^2} + 4\pi^2 n^2 }^{-2}
,
\>
where $u = \sqrt{3} \frac{\abs{x-x'}-1/2}{\kappa}$.

For $\nu = 5/2$, they reduce to
\<
k_{5/2}(x, x')
&=
\frac{\sigma^2}{C_{5/2}}
\del{
    a_{2, 0}
    \cosh \del{u}
    +
    a_{2, 1}
    u
    \sinh \del{u}
    +
    a_{2, 2}
    u^2
    \cosh \del{u}
}
,
\\
\rho_{5/2}(n)
&=
\frac{\sigma^2}{C_{5/2}}
\frac{
    2
    \sqrt{5}
    \sinh \left(\frac{\sqrt{5}}{2 \kappa} \right)
    }
    {
    (2\pi)^{-4}
    \kappa
    }
\del{\frac{5}{\kappa^2} + 4\pi^2 n^2 }^{-3}
,
\>
where $u = \sqrt{5} \frac{\abs{x-x'}-1/2}{\kappa}$ and
\<
a_{2, 0}
&=
\frac{\pi^4 \kappa^2}{50} 
\del{
    -5 + 12 \kappa^2
    + 6 \sqrt{5} \kappa \coth \del{\frac{\sqrt{5}}{2 \kappa}}
    +10 \coth \del{\frac{\sqrt{5}}{2 \kappa}}^2
}
,
\\
a_{2, 1}
&=
-\frac{2 \pi^4 \kappa^3}{25}
\del{
    3 \kappa
    + \sqrt{5} \coth \del{\frac{\sqrt{5}}{2 \kappa}}
}
,
\qquad
a_{2, 2}
=
\frac{2 \pi^4 \kappa^4}{25}
.
\>

\end{example}

Finally, we discuss Fourier feature approximations. 
The main ingredient of these approximations is the formula
\[
k_{(\cdot)}(x, x')
=
\sum\limits_{n \in \Z}
    \rho_{(\cdot)}(n)
    e^{2 \pi i n (x-x')}
=
\rho_{(\cdot)}(0)
+
2
\sum\limits_{n \in \N}
    \rho_{(\cdot)}(n)
    \cos (2 \pi n (x-x'))
.
\]
The sum on the right hand side can be approximated either deterministically by truncating the series, or randomly with Monte Carlo techniques. 
This corresponds respectively to the following two approximations of the process
\<
f^D_{(\cdot)}(x)
&=
\sum_{n=-N}^{N}
\sqrt{\rho_{(\cdot)}(n)}
\del[1]{
    w_{n, 1} \cos(2 \pi n x)
    +
    w_{n, 2} \sin(2 \pi n x)
}
,
&
w_{n,j} &\~[N](0,1)
,
\>
and
\<
\hspace{-0.5cm}
f^R_{(\cdot)}(x)
&=
\frac{\sigma}{\sqrt{N}}
\sum_{k=0}^{N-1}
\del[1]{
    w_{n, 1} \cos(2 \pi n_k x)
    +
    w_{n, 2} \sin(2 \pi n_k x)
}
,
&
n_k &\~\frac{\rho_{(\cdot)}(n)}{\sigma^2}
,
\>
where $w_{n,j}$ is defined identically.
Note that the kernel discussed here is defined via periodic summation defined in Section \ref{sec:torus}.

\subsection*{Sphere}

The presentation here is based on \textcite[Section 7.3]{devito19}. 
Assume $d>1$ and take $M = \bb{S}^d$, where $\bb{S}^d$ is $d$-dimensional sphere $\bb{S}^d \subseteq \R^{d+1}$. 
The 1-dimensional case discussed in the previous section can be handled similarly but requires some additional care. 

The eigenvalues of $\Delta_{\bb{S}^d}$ are $\lambda_n = n (n + d - 1), n \in \Z_+$. The eigenspace $\c{H}_n$ corresponding to $\lambda_n$ has dimension $d_n = (2 n + d - 1)\frac{\Gamma(n + d - 1)}{\Gamma(d)\Gamma(n+1)}$ and consists of spherical harmonics of degree $n$. The addition formula for spherical harmonics yields that for any orthonormal basis $f_{n, k}$ of eigenspace~$\c{H}_n$
\[
\sum_{k=1}^{d_n} f_{n, k}(x) f_{n, k}(x') = c_{n, d} \c{C}_n^{(d-1)/2} (\cos(d_M(x, x')))
\]
where $\c{C}_n^{(d-1)/2}$ are Gegenbauer polynomials and the constant $c_{n, d}$ is defined by
\[
c_{n, d} = \frac{d_n \Gamma((d+1)/2)}{2 \pi^{(d+1)/2} \c{C}_n^{(d-1)/2} (1)}
.
\]
From this we deduce that the formula for Mat\'{e}rn kernel on $\bb{S}^d$ is given by
\<
k_\nu(x, x')
&=
\frac{\sigma^2}{C_{\nu}}
\sum\limits_{n=0}^\infty \del{\frac{2 \nu}{\kappa^2} + \lambda_n}^{-\del{\nu+\frac{d}{2}}} \del{\sum\limits_{k=1}^{d_n} f_{n, k}(x) f_{n, k}(x')}
\\
&=
\frac{\sigma^2}{C_{\nu}}
\sum\limits_{n=0}^\infty \del{\frac{2 \nu}{\kappa^2} + n (n + d - 1)}^{-\del{\nu+\frac{d}{2}}} c_{n, d} \c{C}_n^{(d-1)/2} (\cos(d_M(x, x'))).
\>
Analogously for squared exponential kernel on $\bb{S}^d$, we obtain
\<
k_\infty(x, x')
&=
\frac{\sigma^2}{C_{\infty}}
\sum\limits_{n=0}^\infty e^{-\frac{\kappa^2}{2} \lambda_n} \del{\sum\limits_{k=1}^{d_n} f_{n, k}(x) f_{n, k}(x')}
\\
&=
\frac{\sigma^2}{C_{\infty}}
\sum\limits_{n=0}^\infty e^{-\frac{\kappa^2}{2} n (n + d - 1)} c_{n, d} \c{C}_n^{(d-1)/2} (\cos(d_M(x, x'))).
\>

Summarizing the above, for the sphere $\bb{S}^d$ we obtain the following.

\begin{example}[Mat\'{e}rn and squared exponential kernels on $\bb{S}^d$]
The Mat\'{e}rn and squared exponential kernels and the corresponding spectral densities are given as follows
\<
k_{\nu}(x, x')
&=
\frac{\sigma^2}{C_{\nu}}
\sum\limits_{n=0}^\infty \del{\frac{2 \nu}{\kappa^2} + n (n + d - 1)}^{-\del{\nu+\frac{d}{2}}} c_{n, d} \c{C}_n^{(d-1)/2} (\cos(d_M(x, x')))
,
\\
k_{\infty}(x, x')
&=
\frac{\sigma^2}{C_{\infty}}
\sum\limits_{n=0}^\infty e^{-\frac{\kappa^2}{2} n (n + d - 1)} c_{n, d} \c{C}_n^{(d-1)/2} (\cos(d_M(x, x')))
,
\\
\rho_\nu(n)
&=
\frac{\sigma^2}{C_{\nu}}
\del{\frac{2 \nu}{\kappa^2} + n (n + d - 1)}^{-\del{\nu+\frac{d}{2}}}
,
\\
\rho_\infty(n)
&=
\frac{\sigma^2}{C_{\infty}}
e^{-\frac{\kappa^2}{2} n (n + d - 1)}
,
\>
where $d_M(x, x')$ is the geodesic distance between $x, x' \in \bb{S}^d$, $\c{C}_n^{(d-1)/2}$ are Gegenbauer polynomials and
\<
c_{n, d}
&=
\frac{d_n \Gamma((d+1)/2)}{2 \pi^{(d+1)/2} \c{C}_n^{(d-1)/2} (1)}
&
&\t{with}
&
d_n
&=
(2 n + d - 1)\frac{\Gamma(n + d - 1)}{\Gamma(d)\Gamma(n+1)}
.
\>
Note that for every $n \in \Z_+$ there are $d_n$ Laplace--Beltrami eigenfunctions.
Thus, in the following Fourier feature approximation, we cannot apply the combinatorial simplification that yields the Gegenbauer polynomials, and instead work with spherical harmonics directly.
The generalized Fourier feature approximations, both deterministic and random, are given by
\<
f^D_{(\cdot)}(x)
&=
\sum\limits_{n=0}^{N-1}
    \sqrt{\rho_{(\cdot)}(n)}
    \sum\limits_{j=1}^{d_n}
        w_{n, j}
        f_{n, j}(x)
, & w_{n,j} &\~[N](0,1)
,
\>
and
\<
f^R_{(\cdot)}(x)
&=
\frac{\sigma}{\sqrt{N}}
\sum\limits_{k=0}^{N-1}
     \sum\limits_{j=1}^{d_{n_k}}
     w_{n_k, j}
     f_{n_k, j}(x)
,
&
n_k &\~\frac{\rho_{(\cdot)}(n)}{\sigma^2}
,
&
w_{n_k,j} &\~[N](0,1)
,
\>
where $f_{n, k}$ are the actual spherical harmonics forming the orthonormal basis of eigenspace $\c{H}_n$.
\end{example}

No closed form expressions for $k_\nu$ and $k_\infty$ are known to the authors. 
Nevertheless, approximating the series defining $k_\nu$ and $k_\infty$ by truncation gives a practical approach with reasonable error control.
Note that the larger $\nu$ is, the faster these series converge, and the more accurate the resulting approximations are.

\section{Proof of Proposition \ref{prop:matern-and-se-torus}} \label{apdx:torus-proof}

{
\def\thetheorem{\ref{prop:matern-and-se-torus}}
\begin{proposition}
The Mat\'{e}rn (squared exponential) kernel $k$ in  \eqref{eqn:matern-torus}  (resp. \eqref{eqn:rbf-torus}) is the covariance kernel of the Mat\'{e}rn (resp. squared exponential) Gaussian process in the sense of \textcite{whittle63}.
\end{proposition}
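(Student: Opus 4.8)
}

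\begin{proof}
The plan is to show that the periodic summations \eqref{eqn:matern-torus} and \eqref{eqn:rbf-torus} admit the same generalized Fourier expansion as the covariance kernels of the solutions of the SPDEs \eqref{eqn:spde-matern} and \eqref{eqn:spde-rbf} on $\bb{T}^d$, and then to reconcile normalizations. First I would record the spectral data: with the flat product metric, $-\lap$ on $\bb{T}^d$ has eigenfunctions $x \mapsto e^{2\pi i\innerprod{n}{x}}$ (equivalently the real sine/cosine basis), $n \in \Z^d$, with eigenvalues $\lambda_n = 4\pi^2\norm{n}^2$, and the canonical white noise with respect to the Riemannian (Lebesgue) volume has the formal expansion $\c{W} = \sum_{n\in\Z^d}\xi_n\,e^{2\pi i\innerprod{n}{x}}$ with standardized independent Gaussian coefficients $\xi_n$, understood as an element of a negative-order Sobolev space in the sense of \textcite{lototsky17}. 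Applying the functional calculus \eqref{eqn:operator_definitions}, the operator $\del{\frac{2\nu}{\kappa^2}-\lap}^{\nu/2+d/4}$ is diagonal in this basis, multiplying the $n$-th coefficient by $\del{\frac{2\nu}{\kappa^2}+4\pi^2\norm{n}^2}^{\nu/2+d/4}$; hence the unique solution of \eqref{eqn:spde-matern} has Fourier coefficients $\hat f_n = \del{\frac{2\nu}{\kappa^2}+4\pi^2\norm{n}^2}^{-\nu/2-d/4}\xi_n$, which lie in $H^s(\bb{T}^d)$ a.s. for every $s < \nu$ since $\nu > 0$. Computing the covariance of $f$ term by term, which is justified by absolute summability, yields
\[
k_{\mathrm{SPDE}}(x,x') = c\sum_{n\in\Z^d}\del{\frac{2\nu}{\kappa^2}+4\pi^2\norm{n}^2}^{-\nu-\frac{d}{2}}e^{2\pi i\innerprod{n}{x-x'}}
\]
for some positive constant $c$ (and similarly with $\del{\frac{2\nu}{\kappa^2}+4\pi^2\norm{n}^2}^{-\nu-d/2}$ replaced by $e^{-2\pi^2\kappa^2\norm{n}^2}$ in the squared exponential case).

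Next I would identify \eqref{eqn:matern-torus} with the same series via the Poisson summation formula. The Euclidean Mat\'{e}rn kernel \eqref{eqn:matern-kernel-eucl} decays exponentially, and, by the classical computation underlying \textcite{whittle63} (equivalently, by Bochner's theorem), its Fourier transform is the spectral density $S_\nu(\xi) \propto \del{\frac{2\nu}{\kappa^2}+4\pi^2\norm{\xi}^2}^{-\nu-d/2}$, with an explicit constant depending on $\sigma^2,\nu,\kappa,d$. Since both this kernel and $S_\nu$ are continuous and decay at infinity at least as fast as $\norm{\cdot}^{-d-\eps}$, Poisson summation applies and shows that \eqref{eqn:matern-torus} equals $\frac{1}{C'_\nu}\sum_{m\in\Z^d}S_\nu(m)\,e^{2\pi i\innerprod{m}{x-x'}}$, which is exactly $k_{\mathrm{SPDE}}$ up to a positive multiplicative constant. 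As $C'_\nu$ in \eqref{eqn:matern-torus} is fixed precisely so that $k_\nu(x,x) = \sigma^2$, and the SPDE kernel is likewise normalized to variance $\sigma^2$ (this being the role of the renormalization of $\c{W}$ mentioned in Section~\ref{sec:spde}, and on $\bb{T}^d$ the variance is constant by stationarity), the two kernels coincide. The squared exponential case is identical after replacing the functional calculus of $\del{\frac{2\nu}{\kappa^2}-\lap}^{\nu/2+d/4}$ by that of $e^{-\frac{\kappa^2}{4}\lap}$, which multiplies the $n$-th coefficient by $e^{\pi^2\kappa^2\norm{n}^2}$, and using that the Fourier transform of a Gaussian is again a Gaussian in $\c{S}(\R^d)$, so that Poisson summation again applies with both sides Schwartz.

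The main obstacle is the analytic bookkeeping around the SPDE rather than any single estimate: one must pin down the solution concept --- white noise as a distribution-valued Gaussian field, negative fractional powers of $\frac{2\nu}{\kappa^2}-\lap$ defined by functional calculus on the torus, and existence and uniqueness in the Hilbert scale $H^s(\bb{T}^d)$ following \textcite{lototsky17} --- and then justify that the covariance of the resulting field is obtained by interchanging expectation with the infinite Fourier sum. This rests on $\nu > 0$ making the $\lambda_n$-weighted coefficients summable, together with Weyl-type growth $\lambda_n \asymp n^{2/d}$ to control tails --- the same facts that make \eqref{eqn:matern-torus} converge, so the two convergence arguments run in parallel. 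Verifying the Poisson summation hypotheses for the Mat\'{e}rn kernel is then routine but should be stated explicitly.
\end{proof}
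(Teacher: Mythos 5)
Your proposal is correct and follows essentially the same route as the paper's proof: both identify the Laplace--Beltrami eigendata on $\bb{T}^d$, express the Whittle-sense covariance as the lattice Fourier series with weights $\del{\tfrac{2\nu}{\kappa^2}+4\pi^2\norm{n}^2}^{-\nu-d/2}$ (resp. $e^{-2\pi^2\kappa^2\norm{n}^2}$), apply Poisson summation to the periodic sums \eqref{eqn:matern-torus} and \eqref{eqn:rbf-torus}, and reconcile the resulting proportional expressions via the variance normalization. The only cosmetic difference is that you re-derive the spectral form of the SPDE solution directly in the torus Fourier basis, whereas the paper invokes its general compact-manifold formulas \eqref{eqn:mani-matern-formula}--\eqref{eqn:mani-rbf-formula} and then specializes.
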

}

\begin{proof}
Following Section \ref{sec:comp_man}, the Mat\'{e}rn and square exponential kernels on a compact Riemannian manifold in the sense of \textcite{whittle63} are given by \eqref{eqn:mani-matern-formula} and \eqref{eqn:mani-rbf-formula}.
For the sake of this proof we denote these kernels by $k^{(w)}_{(\cdot)}$ and the kernels defined by periodic summation (equations \eqref{eqn:matern-torus}, \eqref{eqn:rbf-torus}) by $k^{(p)}_{(\cdot)}$. 
We prove here that $k^{(p)}_{(\cdot)}$ are equal to $k^{(w)}_{(\cdot)}$.

To make equations \eqref{eqn:mani-matern-formula} and \eqref{eqn:mani-rbf-formula} explicit for~$\bb{T}^d = \R^d / \Z^d$, we need to compute the eigenfunctions and eigenvalues of Laplace--Beltrami operator~$\Delta_g$ on $\bb{T}^d$.
This is not difficult, since $\bb{T}^d$ is equipped with the quotient metric, which is flat.
In particular, this amounts to considering the eigenfunctions of Euclidean Laplacian, which are sines and cosines (complex exponentials), and leaving only those which are $1$-periodic.
The procedure is described in detail in \textcite{gordon2000}, and yields the following.
For $\tau \in \Z_+^d, \tau \neq 0$ the pair of functions $f_{\tau,1}(x) = \sqrt{2} \cos \del{2 \pi \tau \cdot x}$ and $f_{\tau,2}(x) = \sqrt{2} \sin \del{2 \pi \tau \cdot x}$ are eigenfunctions of the Laplace--Beltrami operator, corresponding to the eigenvalue $\lambda_\tau = 4 \pi^2 |\tau|^2$.
It is important to note that $\tau$ and $-\tau$ correspond to the same (up to a sign change) pair of eigenfunctions.
Together with the function $f_0(x) = 1$ corresponding to the eigenvalue $\lambda_0 = 0$, they form the orthonormal basis of $L^2(\bb{T}^d)$. 
To unify notation, we write $f_{0,1}(x) = 1$ and $f_{0,2}(x) = 0$. 

Since the series defining $k^{(w)}_{(\nu)}$ is unconditionally convergent \cite{devito19}, we obtain
\[
k^{(w)}_{\nu}(x, x')
=
\frac{\sigma^2}{C_{\nu}}
\sum\limits_{
    \tau \in \c{I}
}
    \del{
        \frac{2 \nu}{\kappa^2}
        + 4 \pi^2 \abs{\tau}^2
    }^{-\nu - \frac{d}{2}}
    \del{
    f_{\tau,1}(x) f_{\tau,1}(x')
    +
    f_{\tau,2}(x) f_{\tau,2}(x')
    }
,
\]
where $\c{I} \subseteq \Z^d$ is a maximal subset of $\Z^d$ such that $\tau \in \c{I}$ and $\tau \not = 0$ implies $-\tau \not\in \c{I}$.
This, using identities $\cos(x-y) = \cos(x) \cos(y) + \sin(x) \sin(y)$ and $\cos(x) = \del{\cos(x) + \cos(-x)}/2$,~becomes
\[
k^{(w)}_{\nu}(x, x')
=
\frac{\sigma^2}{C_{\nu}}
\sum\limits_{
    \tau \in \Z^d
}
    \del{
        \frac{2 \nu}{\kappa^2}
        + 4 \pi^2 \abs{\tau}^2
    }^{-\nu - \frac{d}{2}}
    \cos \del{2 \pi \tau \cdot (x-x')}
.
\]
At the same time, the generalized Poisson summation formula gives
\[
\hspace{-0.05cm}
k^{(p)}_{\nu}(x, x')
=
\frac{\sigma^2}{C'_{\nu}}
\sum\limits_{
    n \in \Z^d
}
    S(n)
    e^{2 \pi i n \cdot (x-x')}
=
\frac{\sigma^2}{C'_{\nu}}
\del[2]{
    \sum\limits_{
        \tau \in \Z^d
    }
        S(\tau)
        \cos \del{2 \pi \tau \cdot (x-x')}
}
,
\]
where $S$ is the spectral density of Mat\'{e}rn kernel on $\bb{R}^d$. 
This is given by \cite[Section 4.2.1]{rasmussen06}
\[
S(\xi)
=
\frac{
    2^d
    \pi^{\frac{d}2}
    \Gamma(\nu+\frac{d}2)
    (2\nu)^\nu
}{
    \Gamma(\nu)\kappa^{2\nu}
}
\left(
    \frac{2\nu}{\kappa^2}
    +
    4\pi^2 \abs{\xi}^2
\right)^{-\left(\nu+\frac{d}2\right)}
.
\]
Thus for finite $\nu$ we have
\[ \label{eqn:mult-add-consts-matern}
k^{(p)}_{\nu}(x, x')
=
\frac{
    C_{\nu}
    2^d
    \pi^{\frac{d}2}
    \Gamma(\nu+\frac{d}2)
    (2\nu)^\nu
}{
    C'_{\nu}
    \Gamma(\nu)\kappa^{2\nu}
}
k^{(w)}_{\nu}(x, x')
.
\]
Recalling that $C_\nu$ and $C'_{\nu}$ are chosen so that $k^{(p)}_{\nu}(x, x) = \sigma^2 = k^{(w)}_{\nu}(x, x)$, we see that $k^{(p)}_{\nu}(x, x') = k^{(w)}_{\nu}(x, x')$, which gives the claim.

The argument for squared exponential kernel ($\nu = \infty$) is essentially the same. 
In this case we have
\<
k^{(w)}_{\infty}(x, x')
&=
\frac{\sigma^2}{C_{\infty}}
\sum\limits_{
    \tau \in \Z^d
}
    \exp \del{
        -
        2 \pi^2 \kappa^2 \abs{\tau}^2
    }
    \cos \del{2 \pi \tau \cdot (x-x')}
,
\\
k^{(p)}_{\infty}(x, x')
&=
\frac{\sigma^2}{C'_{\infty}}
\del[2]{
    \sum\limits_{
        \tau \in \Z^d
    }
        S(\tau)
        \cos \del{2 \pi \tau \cdot (x-x')}
}
,
\>
but this time with 
\[
S(\xi)
=
\sigma^2
(2 \pi \kappa^2)^{d/2}
e^{-2 \pi^2 \kappa^2 \abs{\xi}^2}
.
\]
This gives
\[ \label{eqn:mult-add-consts-se}
k^{(p)}_{\infty}(x, x')
=
\frac{
    C_{\infty}
    (2 \pi \kappa^2)^{d/2}
}{
    C'_{\infty}
}
k^{(w)}_{\infty}(x, x'),
\]
which translates into $k^{(p)}_{\infty}(x, x') = k^{(w)}_{\infty}(x, x')$ with our specific choice of constants $C_{\infty}$ and $C'_{\infty}$,
and thus completes the proof.
\end{proof}

\section{Theory: compact Riemannian manifolds without boundary} \label{apdx:theory}

Here we introduce an appropriate formalism for the stochastic partial differential equations \eqref{eqn:spde-matern} and \eqref{eqn:spde-rbf} and prove that their solutions are the reproducing kernels of the Sobolev and diffusion spaces given by \textcite{devito19}.

Let $(M,g)$ be a compact connected Riemannian manifold without boundary,\footnote{Such a manifold is automatically complete, since a compact metric space is always complete.} and let $\lap_g$ be the Laplace--Beltrami operator defined on the space $C^\infty(M)$ of smooth functions on $M$.
Let $L^2(M)$ denote the space of (almost everywhere equal equivalence classes of) functions on $M$ which are square integrable with respect to the Riemannian volume measure.

\begin{theorem}
  The operator $- \lap_g: C^\infty(M) \to L^2(M)$ uniquely extends to a self-adjoint unbounded operator from some domain $D(\lap_g) \subseteq L^2(M)$ to $L^2(M)$, and this extension, denoted again by $- \lap_g$, is a positive operator.
\end{theorem}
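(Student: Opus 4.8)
The plan is to exhibit the desired self-adjoint operator as the closure of $-\lap_g$ acting on $C^\infty(M)$, and to obtain the uniqueness claim by showing that this symmetric operator is \emph{essentially} self-adjoint; one could instead produce \emph{some} self-adjoint extension via the Friedrichs construction, but essential self-adjointness is needed anyway for uniqueness, so I would establish it directly. First I would record the two elementary facts that drive everything. For $u, v \in C^\infty(M)$, Green's first identity on a manifold \emph{without boundary} gives
\[
\innerprod{-\lap_g u}{v} = \int_M g(\nabla u, \nabla v) \,\d\vol_g = \innerprod{u}{-\lap_g v},
\]
where $\innerprod{\cdot}{\cdot}$ is the $L^2(M)$ inner product; hence $-\lap_g$ is symmetric, and taking $v = u$ gives $\innerprod{-\lap_g u}{u} = \norm{\nabla u}^2_{L^2(M)} \geq 0$, so $-\lap_g$ is non-negative as a quadratic form. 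Since $M$ is compact, $C^\infty(M)$ is dense in $L^2(M)$, so $-\lap_g$ is a densely defined, symmetric, non-negative operator, hence closable; write $\overline{-\lap_g}$ for its closure and $D(\lap_g)$ for the domain of the closure.

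Next I would prove essential self-adjointness via the von Neumann deficiency-index criterion: it suffices to check that the only $u \in L^2(M)$ with $-\lap_g u = \pm i\,u$ in the distributional sense is $u = 0$, i.e. $\ker\bigl((-\lap_g)^\ast \mp i\bigr) = \{0\}$. Given such a $u$, elliptic regularity for the elliptic operator $-\lap_g \mp i$ (applied locally in charts, where it is a lower-order perturbation of the Euclidean Laplacian) upgrades $u$ to a smooth function; the displayed identity then yields $\pm i\,\norm{u}^2_{L^2(M)} = \innerprod{-\lap_g u}{u} = \norm{\nabla u}^2_{L^2(M)}$, and since the right-hand side is real and non-negative, $u = 0$. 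With both deficiency indices zero, $-\lap_g$ is essentially self-adjoint, so $\overline{-\lap_g}$ is the \emph{unique} self-adjoint operator extending $-\lap_g\colon C^\infty(M) \to L^2(M)$. Positivity of this extension then follows by density: for $u \in D(\lap_g)$ choose $u_n \in C^\infty(M)$ with $u_n \to u$ and $-\lap_g u_n \to \overline{-\lap_g}\,u$ in $L^2(M)$, and pass to the limit in $\innerprod{-\lap_g u_n}{u_n} \geq 0$.

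The main obstacle is the essential self-adjointness step, and the one genuinely analytic ingredient on which it rests is elliptic regularity for $-\lap_g$ on $M$: distributional solutions of $(-\lap_g - \mu) u = f$ inherit the local smoothness of $f$. Everything else — Green's identity, density of $C^\infty(M)$ in $L^2(M)$, and the limiting argument for positivity — is routine once this is available. As a remark, combining the same regularity with compactness of the resolvent (Rellich's theorem, using compactness of $M$) one further identifies $D(\lap_g)$ with the Sobolev space $H^2(M)$ and recovers the discrete spectral decomposition \eqref{eqn:lap_spec} used throughout the paper; but only existence, uniqueness, and positivity of the extension are needed for the present statement.
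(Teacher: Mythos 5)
Your proposal is correct. Note, though, that the paper does not actually prove this statement at all: it simply cites Theorem~2.4 of Strichartz (1983), which establishes essential self-adjointness of the Laplacian on a general \emph{complete} (possibly non-compact) Riemannian manifold. In that generality the integration-by-parts step is the delicate point — one needs cutoff functions and a Gaffney-type argument to justify the absence of boundary terms — which is why the paper outsources it. Your argument instead exploits compactness directly: Green's identity holds with no caveats on a closed manifold, giving symmetry and non-negativity of the form on $C^\infty(M)$ for free, and the deficiency-index computation $\ker\bigl((-\lap_g)^\ast \mp i\bigr) = \{0\}$ then reduces, via elliptic regularity, to the observation that $\pm i\,\norm{u}^2$ cannot equal the real non-negative quantity $\norm{\nabla u}^2$ unless $u=0$. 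This is the standard textbook route for the compact case, it is self-contained modulo elliptic regularity (which you correctly flag as the one genuinely analytic input), and your closing remark that the same machinery plus Rellich compactness yields $D(\lap_g)=H^2(M)$ and the discrete Sturm--Liouville decomposition is exactly how the paper's subsequent spectral theorem is obtained. The only cosmetic caveat is that the deficiency-index criterion lives in complex $L^2(M)$, so one should either work there throughout or note at the end that the self-adjoint extension restricts to the real subspace; this does not affect the validity of the argument.
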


\begin{proof}
\textcite[Theorem 2.4]{strichartz1983}.
\end{proof}

This allows one to apply the spectral theorem for self-adjoint unbounded operators, which, loosely speaking, diagonalizes such operators and enables us to introduce a functional calculus for them.
The general statement of the spectral theorem for unbounded self-adjoint operators can be found in various textbooks---see, for instance, \textcite[Chapters XIX and XX]{lang1993} or \textcite[Chapter VIII]{reed1980}.
For our setting, we do not need this general statement, as there is a separate theorem for the special case of the Laplace--Beltrami operator on a compact manifold, commonly referred to as the Sturm-Liouville decomposition.

\begin{theorem}[Sturm--Liouville decomposition]
Let $(M,g)$ be a compact Riemannian manifold without boundary.
Then there exists an orthonormal basis $\{f_n\}_{n\in\Z_+}$, of $L^2(M)$ such that $-\lap_g f_n = \lambda_n f_n$ with $0 = \lambda_0 < \lambda_1 \leq .. \leq \lambda_n$ and $\lambda_n\-> \infty$ as $n\->\infty$.
Moreover, $-\lap_g$ admits the representation
\[
-\lap_g f = \sum_{n=0}^\infty \lambda_n \innerprod{f}{f_n} f_n,
\]
which converges unconditionally in $L^2(M)$ for all $f \in D(\lap_g)$.
\end{theorem}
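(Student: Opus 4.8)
The plan is to reduce the statement to the classical spectral theorem for compact self-adjoint operators, applied not to the unbounded operator $-\lap_g$ itself but to a resolvent. By the preceding theorem, $-\lap_g$ is self-adjoint and positive on its domain $D(\lap_g) \subseteq L^2(M)$, so its spectrum lies in $[0,\infty)$ and $-1$ belongs to its resolvent set. Hence $R := (I-\lap_g)^{-1}$ is a well-defined, bounded, injective, self-adjoint, positive operator on $L^2(M)$ whose range is exactly $D(\lap_g)$. The crux of the argument is to show that $R$ is \emph{compact}; granting this, the rest is bookkeeping.

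To prove compactness of $R$ I would combine elliptic regularity with the Rellich--Kondrachov theorem. The map $R$ sends $L^2(M)$ boundedly into $D(\lap_g)$ with its graph norm, and elliptic estimates for the second-order operator $\lap_g$ on the compact manifold $M$ show that this graph norm is equivalent to the Sobolev norm of $H^2(M)$; thus $R \colon L^2(M) \to H^2(M)$ is bounded. Since $M$ is compact and without boundary, the inclusion $H^2(M) \hookrightarrow L^2(M)$ is compact---this is Rellich--Kondrachov on a closed manifold, obtained by a partition of unity reducing to the Euclidean statement. Composing gives compactness of $R$ on $L^2(M)$. I expect this to be the main obstacle, since it requires a careful setup of Sobolev spaces on $M$ and the identification $D(\lap_g)=H^2(M)$; an equivalent route would be to show directly that the heat operators $e^{t\lap_g}$, $t>0$, are Hilbert--Schmidt, which again forces a discrete spectrum.

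With $R$ compact, self-adjoint, injective and positive, the spectral theorem for compact self-adjoint operators produces an orthonormal basis $\{f_n\}_{n\in\Z_+}$ of $L^2(M)$ of eigenvectors of $R$ with eigenvalues $\mu_n>0$ and $\mu_n\to 0$ (the space being infinite-dimensional). Each $f_n$ then lies in $D(\lap_g)$ and satisfies $-\lap_g f_n = \lambda_n f_n$ with $\lambda_n := \mu_n^{-1}-1$; positivity of $-\lap_g$ forces $\lambda_n \geq 0$, and $\mu_n\to 0$ gives $\lambda_n\to\infty$. Finite-dimensionality of each eigenspace, a further consequence of compactness, lets me list the $\lambda_n$ with multiplicity in nondecreasing order $0 \leq \lambda_0 \leq \lambda_1 \leq \cdots$. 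For the strict inequality $0=\lambda_0<\lambda_1$ I would use connectedness of $M$: if $\lap_g f = 0$ then integration by parts gives $\int_M |\nabla f|_g^2\,\d\vol_g = 0$, so $f$ is locally constant, hence constant, and the eigenspace for the eigenvalue $0$ is one-dimensional. (Smoothness of the $f_n$, although not strictly needed for the statement as written, follows by bootstrapping elliptic regularity applied to $\lap_g f_n = -\lambda_n f_n$.)

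Finally, the series representation is a formal consequence. Since $\{f_n\}$ is an orthonormal basis, every element of $L^2(M)$ equals its Fourier expansion with unconditional convergence, as is always the case for orthogonal series in a Hilbert space. Given $f\in D(\lap_g)$, self-adjointness yields $\innerprod{-\lap_g f}{f_n} = \innerprod{f}{-\lap_g f_n} = \lambda_n\innerprod{f}{f_n}$, and applying the orthonormal-basis expansion to the fixed element $-\lap_g f\in L^2(M)$ gives $-\lap_g f = \sum_{n=0}^\infty \lambda_n \innerprod{f}{f_n} f_n$ with unconditional convergence, as claimed.
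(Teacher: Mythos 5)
Your proof is correct and is the standard argument for this classical result: reduce to the resolvent $(I-\Delta_g)^{-1}$, prove its compactness via elliptic regularity together with Rellich--Kondrachov on the closed manifold, apply the spectral theorem for compact self-adjoint operators, and recover the series expansion from self-adjointness; the paper does not prove the theorem itself but cites Chavel and Canzani, where essentially this same argument appears. The one point worth flagging is that the strict inequality $\lambda_0 < \lambda_1$ genuinely requires $M$ to be connected --- you correctly invoke connectedness, and the paper's appendix does assume it, even though the theorem statement as quoted omits the word.
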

\begin{proof}
See \textcite[page 139]{chavel1984} or \textcite[Theorem 44]{canzani13}.
\end{proof}

This allows one to define a (possibly unbounded) operator $\Phi(-\lap_g)$ for any Borel measurable function $\Phi: [0, +\infty) \to \R$ by
\[ \label{eqn:func_calc}
\Phi(-\lap_g) f = \sum\limits_{n=0}^\infty \Phi(\lambda_n) \innerprod{f}{f_n} f_n
\]
with domain given by
\[
D(\Phi(-\lap_g)) = \Set*{f \in L^2(M)}{\sum\limits_{n=0}^\infty \abs{\Phi(\lambda_n)}^2 \abs{\innerprod{f}{f_n}}^2 < \infty}
.
\]
This idea is called the functional calculus for the operator $-\lap_g$.
It allows us to formally define operators from the SPDEs under consideration with
\<
\label{eqn:matern_op_def}
\del{\frac{2 \nu}{\kappa^2} - \lap_g}^{\frac{\nu}{2}+\frac{d}{4}} f
&=
\sum_{n=0}^\infty \del{\frac{2 \nu}{\kappa^2} + \lambda_n}^{\frac{\nu}{2}+\frac{d}{4}} \innerprod{f}{f_n} f_n
,
&
&\t{using} \Phi(\lambda) = \del{\frac{2 \nu}{\kappa^2} + \lambda}^{\frac{\nu}{2}+\frac{d}{4}},
\\
\label{eqn:se_op_def}
e^{-\frac{\kappa^2}{4} \Delta} f
&=
\sum_{n=0}^\infty e^{\frac{\kappa^2 \lambda_n}{4}} \innerprod{f}{f_n} f_n
,
&
&\t{using} \Phi(\lambda) = e^{\frac{\kappa^2 \lambda}{4}}
.
\>
Denote these operators by $\c{L}$.
We now proceed to define an appropriate formalism for the SPDEs
\[ \label{eqn:the_spde}
\c{L}f = \c{W}
.
\]
We start by introducing a notion of generalized Gaussian random fields.
\begin{definition}[Definition 3.2.10 of \textcite{lototsky17}] \label{dfn:lot3.2.10}
  A zero-mean generalized Gaussian field $\f{F}$ over a Hilbert space $H$ is a collection of Gaussian random variables $\cbr{\f{F}(h)}_{h \in H}$ with the properties
  \1
    $\E \del{\f{F} (h)} = 0$ ~for all~ $h \in H$,
  \2
    There exists a bounded, linear, self-adjoint, non-negative operator $K$ on $H$ (called the covariance operator of $\f{F}$) such that
    \[
    \E \del{\f{F}(h) \f{F}(g)} = \innerprod{K h}{g}_H
    \]
    for all $h, g \in H$.
  \0
\end{definition}

A zero-mean generalized Gaussian field $\c{W}$ over a Hilbert space $H$ with identity $I: H \to H$ serving as covariance operator is called the \emph{standard Gaussian white noise} over $H$. 

Let $\c{W}$ be said white noise over $L^2(M)$.
Up to a normalizing constant which ensures that the solution has the right variance, this is equal to the right hand side of equation \eqref{eqn:the_spde}.
We do not dwell on this constant until the very end of this section, where it appears naturally as the normalizing constant of the resulting kernel.

It is easy to see that the generalized Gaussian field which we have just defined can be thought of as an operator from $H$ to the space~$L^2(\Omega)$ of zero mean random variables with finite variance.
From this view, the Gaussian white noise $\c{W}$ is an isometric embedding.

To give more intuition, we explicitly consider how the usual concept of a Gaussian process embeds into this generalization.
Let $f \~[GP](0, k(x, x'))$ be a Gaussian process over a manifold $M$ with covariance function $k(x, x')$.
Assume that $k$ is regular enough to consider samples of $f$ as elements of $L^2(M)$.
Almost every practically reasonable covariance function will be regular enough in this sense, so this assumption is not restrictive.
The generalized Gaussian field over $L^2(M)$ corresponding to $f$ will be the operator $\f{F}_f (g) = \innerprod{f}{g}_{L^2(M)}$ for which
\<
&\E \del{\f{F}_f (h) \f{F}_f (g)}
=
\E \del{ \innerprod{f}{h}_{L^2(M)}  \innerprod{f}{g}_{L^2(M)}}
=
\E \int\limits_M \int\limits_M f(x) h(x) f(y) g(y) \d x \d y
\\
&=
\int\limits_M \int\limits_M \E \del{f(x) f(y)} h(x) g(y) \d x \d y
=
\int\limits_M \int\limits_M k(x, y) h(x) g(y) \d x \d y
=
\innerprod{K h}{g}_{L^2(M)}
,
\>
where $K: L^2(M) \to L^2(M)$ is an operator defined by $(K h) (x) = \int_M k(x, y) h(y) dy$. 
Note that $\c{W}$ is much less regular and \emph{cannot} be represented this way.

Now we are ready to introduce the formal meaning of the SPDEs.

\begin{definition}[Definition 4.2.1 of \textcite{lototsky17}] \label{dfn:lot4.2.1}
  Let $H$ be a Hilbert space and let $\c{L}: H \to L^2(M)$ be a bounded linear operator.
  The zero-mean generalized Gaussian random field~$\f{F}$ over $H$ is a solution of the equation
  \[ \label{eqn:formal_spde}
  \c{L} \f{F} = \c{W}
  \]
  if for every $g \in L^2(M)$
  \[
    \f{F}(\c{L}^* g) = \c{W}(g).
  \]
\end{definition}
\begin{theorem}[Theorem 4.2.2 of \textcite{lototsky17}] \label{thm:lot4.2.2}
  If $\c{L}$ from definition \ref{dfn:lot4.2.1} is invertible, then a zero-mean generalized Gaussian field $\f{F}$ over $H$ defined by
  \[ \label{eqn:spde_solution}
    \f{F}(h) = \c{W} \del{\del{\c{L}^{-1}}^* h}
  \]
  is the unique solution of the equation \eqref{eqn:formal_spde}.
\end{theorem}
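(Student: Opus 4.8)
The plan is to verify three things in order: that the field $\f{F}$ defined by $\f{F}(h) = \c{W}\del{(\c{L}^{-1})^* h}$ is a bona fide zero-mean generalized Gaussian field over $H$ in the sense of Definition \ref{dfn:lot3.2.10}, that it solves \eqref{eqn:formal_spde}, and that it is the unique such solution. All of this is elementary functional analysis once two standard facts are in play: the bounded inverse theorem, and the identity $(\c{L}^*)^{-1} = (\c{L}^{-1})^*$ valid for a bounded bijection between Hilbert spaces.

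First I would note that, since $\c{L}: H \to L^2(M)$ is bounded and bijective, $\c{L}^{-1}$ is bounded, hence so is $(\c{L}^{-1})^*: H \to L^2(M)$; therefore $h \mapsto \c{W}\del{(\c{L}^{-1})^* h}$ is well-defined and linear in $h$, and each $\f{F}(h)$ is Gaussian as a bounded linear image of the Gaussian family $\c{W}$. Zero mean is inherited from $\c{W}$. For the covariance, using that $\c{W}$ has identity covariance operator,
\[
\E\del{\f{F}(h)\f{F}(g)} = \innerprod{(\c{L}^{-1})^* h}{(\c{L}^{-1})^* g}_{L^2(M)} = \innerprod{\c{L}^{-1}(\c{L}^{-1})^* h}{g}_H,
\]
so the covariance operator is $K = \c{L}^{-1}(\c{L}^{-1})^*$, which is bounded, self-adjoint, and non-negative because $\innerprod{Kh}{h}_H = \norm{(\c{L}^{-1})^* h}_{L^2(M)}^2 \geq 0$. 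This step is where I expect the only (mild) care is needed: one must confirm that the formal composition really satisfies Definition \ref{dfn:lot3.2.10} rather than merely being a symbolic expression, but the display above does exactly that.

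Next I would establish the solution property. From $\c{L}^*(\c{L}^{-1})^* = (\c{L}^{-1}\c{L})^* = I_H$ and $(\c{L}^{-1})^*\c{L}^* = (\c{L}\c{L}^{-1})^* = I_{L^2(M)}$ one obtains $(\c{L}^*)^{-1} = (\c{L}^{-1})^*$, and hence for all $g \in L^2(M)$
\[
\f{F}(\c{L}^* g) = \c{W}\del{(\c{L}^{-1})^* \c{L}^* g} = \c{W}(g),
\]
which is precisely the defining relation of a solution in Definition \ref{dfn:lot4.2.1}. Finally, for uniqueness, I would take any zero-mean generalized Gaussian field $\f{G}$ over $H$ with $\f{G}(\c{L}^* g) = \c{W}(g)$ for all $g$; since $\c{L}^*: L^2(M) \to H$ is a bijection by the identity above, every $h \in H$ equals $\c{L}^* g$ for the unique $g = (\c{L}^{-1})^* h$, so $\f{G}(h) = \f{G}(\c{L}^* g) = \c{W}(g) = \f{F}(h)$ as elements of $L^2(\Omega)$, giving $\f{G} = \f{F}$. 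There is no substantive obstacle here: the whole argument is bookkeeping with adjoints, resting entirely on the bounded inverse theorem and the adjoint-of-inverse identity.
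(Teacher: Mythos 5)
Your argument is correct: well-definedness and the covariance operator $\c{L}^{-1}(\c{L}^{-1})^*$, the verification $\f{F}(\c{L}^*g)=\c{W}(g)$ via $(\c{L}^{-1})^*\c{L}^*=I$, and uniqueness via surjectivity of $\c{L}^*$ are exactly the standard proof of this result. The paper itself gives no proof, citing Theorem 4.2.2 of Lototsky and Rozovsky directly, so there is nothing to compare beyond noting that your bookkeeping with adjoints is the intended argument.
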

Informally, this means that $\f{F} = \c{L}^{-1} W$ is the solution of $\c{L} \f{F} = W$.
The operator $\c{L}^{-1} I \c{L}^{-1} = \c{L}^{-2}$ is the covariance operator of $\f{F}$, which is an integral operator with some kernel $k$, which in its turn is the covariance function of $\f{F}$ when viewed as an ordinary Gaussian process over the manifold $M$. 
The kernel $k$ is easily derived from formulas \eqref{eqn:matern_op_def} and \eqref{eqn:se_op_def}---in the following, we will rigorously arrive at this result.

First, we need to introduce appropriate spaces $H$ to make $\c{L}: H \to L^2(M)$ into a bounded linear bijection.

To better fit our presentation into the existing mathematical framework, we would like the operator~\eqref{eqn:matern_op_def} to have $2 \nu / \kappa^2 = 1$. 
The next statement shows that this assumption does not lead to any loss of generality.

\begin{restatable}{proposition}{reductiontheorem}
\label{thm:spec_case_reduction}
Consider a manifold $(M, \tilde{g})$ with $\tilde{g} = \frac{2 \nu}{\kappa^2} g$, then for $\f{F}$ and $\f{G}$ satisfying 
\<
  &\del{\frac{2 \nu}{\kappa^2} - \lap_g}^{\frac{\nu}{2}+\frac{d}{4}} \f{F} = \c{W}
  ,
  &
  &\del{1 - \lap_{\tilde{g}}}^{\frac{\nu}{2}+\frac{d}{4}} \f{G} = \c{W}_{\tilde{g}},
\>
it is true that $\f{F} = \del{\frac{\kappa^2}{2 \nu}}^{\frac{\nu + d}{2}} \f{G}$.
\end{restatable}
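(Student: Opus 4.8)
The plan is to rewrite the first stochastic partial differential equation entirely in terms of the rescaled metric $\tilde{g}$ and its associated white noise $\c{W}_{\tilde{g}}$, and then invoke the uniqueness half of Theorem~\ref{thm:lot4.2.2}. Write $c = \frac{2\nu}{\kappa^2}$, so that $\tilde{g} = c g$. The first step is to record the identities produced by multiplying a Riemannian metric by a positive constant: in local coordinates one checks directly that $\lap_{\tilde{g}} = c^{-1}\lap_g$, that the volume measure obeys $\vol_{\tilde{g}} = c^{d/2}\vol_g$, and hence that $L^2(M,\tilde{g})$ and $L^2(M,g)$ coincide as sets of functions, with $\innerprod{\cdot}{\cdot}_{\tilde{g}} = c^{d/2}\innerprod{\cdot}{\cdot}_g$. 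In particular $-\lap_{\tilde{g}}$ has the same eigenspaces as $-\lap_g$ but eigenvalues $c^{-1}\lambda_n$, so if $\{f_n\}$ is the $L^2(M,g)$-orthonormal eigenbasis furnished by the Sturm--Liouville decomposition, then $\{c^{-d/4}f_n\}$ is the $L^2(M,\tilde{g})$-orthonormal eigenbasis.

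Next I would convert these into identities at the level of the operators and of the noise. Since $1 - \lap_{\tilde{g}} = c^{-1}\del{\frac{2\nu}{\kappa^2} - \lap_g}$, comparing the actions of the two fractional powers on the eigenbasis --- where they multiply $f_n$ by $\del{1 + \lambda_n/c}^{\frac{\nu}{2}+\frac{d}{4}}$ and $\del{\frac{2\nu}{\kappa^2} + \lambda_n}^{\frac{\nu}{2}+\frac{d}{4}}$ respectively --- yields
\[
\del{1 - \lap_{\tilde{g}}}^{\frac{\nu}{2}+\frac{d}{4}} = c^{-\del{\frac{\nu}{2}+\frac{d}{4}}}\del{\frac{2\nu}{\kappa^2} - \lap_g}^{\frac{\nu}{2}+\frac{d}{4}}
.
\]
Separately, since $\c{W}_{\tilde{g}}$, being white noise over $L^2(M,\tilde{g})$, has covariance form $\innerprod{\cdot}{\cdot}_{\tilde{g}} = c^{d/2}\innerprod{\cdot}{\cdot}_g$, it agrees in distribution with $c^{d/4}\c{W}$; we take this as the coupling of the two noises, writing $\c{W}_{\tilde{g}} = c^{d/4}\c{W}$.

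Substituting both identities into the equation $\del{\frac{2\nu}{\kappa^2} - \lap_g}^{\frac{\nu}{2}+\frac{d}{4}}\f{F} = \c{W}$ and collecting the powers of $c$, it becomes $\del{1 - \lap_{\tilde{g}}}^{\frac{\nu}{2}+\frac{d}{4}}\del{c^{\frac{\nu+d}{2}}\f{F}} = \c{W}_{\tilde{g}}$, which is precisely the equation defining $\f{G}$. Since $\del{1 - \lap_{\tilde{g}}}^{\frac{\nu}{2}+\frac{d}{4}}$ is a bounded linear bijection onto $L^2(M,\tilde{g})$, as established in Appendix~\ref{apdx:theory}, Theorem~\ref{thm:lot4.2.2} gives uniqueness of the solution, so $c^{\frac{\nu+d}{2}}\f{F} = \f{G}$, that is $\f{F} = \del{\frac{\kappa^2}{2\nu}}^{\frac{\nu+d}{2}}\f{G}$.

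The step I expect to be most delicate is the last one. The notion of solution in Definition~\ref{dfn:lot4.2.1} is phrased through the adjoint $\c{L}^*$, which is taken with respect to the inner product on the domain space and the inner product on the target $L^2$; these inner products differ between the two equations, one pair involving $\vol_g$ and the Sobolev norm attached to $g$, the other involving $\vol_{\tilde{g}}$ and the correspondingly rescaled norm. Making the formal substitution above rigorous therefore amounts to tracking precisely which Hilbert-space structure every adjoint and every scaling constant refers to --- in particular, reconciling the $c^{d/4}$ from the volume rescaling with the $c^{\frac{\nu}{2}+\frac{d}{4}}$ from the operator rescaling. The analytic inputs (self-adjointness of $-\lap_g$, the functional calculus, bounded invertibility of $\c{L}$, and well-posedness of $\c{L}\f{F} = \c{W}$) are all already available from Appendix~\ref{apdx:theory}.
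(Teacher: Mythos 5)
Your argument is correct and is essentially the paper's own proof: the same metric-rescaling identities for $\lap_{\tilde g}$, the volume measure, the eigenpairs and $\c{W}_{\tilde g}$, the same operator identity from the functional calculus, and the same appeal to uniqueness, with the constants combining to $\del{\frac{\kappa^2}{2\nu}}^{\frac{\nu+d}{2}}$ exactly as in the paper. The only piece you gloss over is that the appendix establishes bounded invertibility only for the normalized operator $\del{1-\lap_{\tilde g}}^{\frac{\nu}{2}+\frac{d}{4}}$; the paper's proof begins by verifying it for the general $\del{\frac{2\nu}{\kappa^2}-\lap_g}^{\frac{\nu}{2}+\frac{d}{4}}$ so that the equation for $\f{F}$ is well-posed, a short eigenbasis estimate you would need to include.
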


We postpone the proof until after we have introduced the remaining formalism.
For the time being, we assume $2 \nu / \kappa^2 = 1$ when dealing with operator~\eqref{eqn:matern_op_def}.

We proceed to define Sobolev spaces on $M$ which will serve as an appropriate $H$ for the operator~\eqref{eqn:matern_op_def}. 
\begin{definition} \label{dfn:sobolev_space}
  Consider $s \in (0, +\infty)$.
  Define the operator $(1-\lap_g)^{-\frac{s}{2}}$ via \eqref{eqn:func_calc}.
  We say that a distribution $f \in \c{D}'(M)$ belongs to the Sobolev space $H^s(M)$ if and only if there exists $g \in L^2(M)$ such that $f = (1-\lap_g)^{-\frac{s}{2}} g$.
  We define the norm with~$\norm{f}_{H^s} = \norm{g}_{L^2(M)}$, and the inner product with~$\innerprod{f}{h}_{H^s(M)} = \innerprod{g}{u}_{L^2(M)}$, if $h = (1-\lap_g)^{-\frac{s}{2}} u \in H^s(M)$.
\end{definition}

This is one of several equivalent definition of Sobolev spaces on Riemannian manifolds, other definitions could be found in \textcite[Theorem 3]{devito19} along with a proof of their equivalence. 
It can be seen, thanks to our assumption $2 \nu / \kappa^2 = 1$, that these spaces are particularly suitable domains for this operator \ref{eqn:matern_op_def}, because these spaces are image of the inverse operator acting on $L^2(M)$.

In addition, following \textcite{devito19}, we introduce \emph{diffusion spaces}, which will be suitable for~\eqref{eqn:se_op_def}.
\begin{definition} \label{dfn:diffusion_space}
  Consider $t \in (0, +\infty)$. Define operator $e^{\frac{t}{2} \lap_g}$ via \eqref{eqn:func_calc}. We say that a distribution $f \in \c{D}'(M)$ belongs to the diffusion space $\c{H}^t(M)$ if and only if there exists $g \in L^2(M)$ such that $f = e^{\frac{t}{2} \lap_g} g$. We define the norm with~$\norm{f}_{\c{H}^t} = \norm{g}_{L^2(M)}$ and the inner product with~$\innerprod{f}{h}_{\c{H}^t(M)} = \innerprod{g}{u}_{L^2(M)}$, if $h = e^{\frac{t}{2} \lap_g} u \in \c{H}^t(M)$.
\end{definition}

Both of these types of spaces are Hilbert spaces \cite{devito19}. 
This gives the following.

\begin{theorem}
  The operators
  \< \label{eqn:operators_with_spaces}
  \del{1 - \lap_g}^{\frac{\nu}{2}+\frac{d}{4}}&: H^{\nu + \frac{d}{2}} \to L^2(M)
  &
  e^{-\frac{\kappa^2}{4} \Delta}&: \c{H}^{\frac{\kappa^2}{2}} \to L^2(M)
  \>
  are bounded and invertible.
\end{theorem}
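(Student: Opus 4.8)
The plan is to show that each of these two operators is, by the very definition of its domain space, an isometric isomorphism onto $L^2(M)$, after which boundedness and invertibility come for free. First I would invoke Proposition~\ref{thm:spec_case_reduction} to assume $2\nu/\kappa^2 = 1$, so that the operator in \eqref{eqn:matern_op_def} is $\del{1-\lap_g}^{s/2}$ with $s := \nu + d/2$. The fact to exploit is that Definition~\ref{dfn:sobolev_space} builds $H^s(M)$ as precisely the image $\Set{\del{1-\lap_g}^{-s/2}g}{g\in L^2(M)}$, normed by $\norm{\del{1-\lap_g}^{-s/2}g}_{H^s} = \norm{g}_{L^2(M)}$. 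Two quick observations make this legitimate: since the multipliers $(1+\lambda_n)^{-s/2}$ in \eqref{eqn:func_calc} are strictly positive, $\del{1-\lap_g}^{-s/2}$ is injective on $L^2(M)$, so the representing $g$ is unique, the norm is well defined, and $g \mapsto \del{1-\lap_g}^{-s/2}g$ is an isometry of $L^2(M)$ onto $H^s(M)$; and since those multipliers are bounded by $1$, we have $H^s(M) \subseteq L^2(M)$, so $\innerprod{f}{f_n} = (1+\lambda_n)^{-s/2}\innerprod{g}{f_n}$ genuinely makes sense.

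The core step is then to verify that $\del{1-\lap_g}^{s/2}$, defined through \eqref{eqn:func_calc} with $\Phi(\lambda) = (1+\lambda)^{s/2}$, inverts this isometry. For $f = \del{1-\lap_g}^{-s/2}g$ the computation to do is the domain condition of \eqref{eqn:func_calc},
\[
\sum_{n=0}^\infty (1+\lambda_n)^{s}\abs{\innerprod{f}{f_n}}^2 = \sum_{n=0}^\infty (1+\lambda_n)^{s}(1+\lambda_n)^{-s}\abs{\innerprod{g}{f_n}}^2 = \norm{g}_{L^2(M)}^2 < \infty,
\]
so that $f$ lies in the domain and $\del{1-\lap_g}^{s/2}f = \sum_n (1+\lambda_n)^{s/2}(1+\lambda_n)^{-s/2}\innerprod{g}{f_n}f_n = g$. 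This exhibits $\del{1-\lap_g}^{s/2}\colon H^{\nu+d/2}\to L^2(M)$ as the inverse of the isometry above; hence it is an isometric isomorphism — bounded (in fact norm-preserving), bijective, with bounded inverse $\del{1-\lap_g}^{-s/2}$.

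For the squared exponential operator I would run the identical argument with $t := \kappa^2/2$: by Definition~\ref{dfn:diffusion_space}, $\c{H}^t(M)$ is the image $\Set{e^{\frac{t}{2}\lap_g}g}{g\in L^2(M)}$ with $\norm{e^{\frac{t}{2}\lap_g}g}_{\c{H}^t}=\norm{g}_{L^2(M)}$; the multipliers $e^{-\frac{t}{2}\lambda_n}$ of $e^{\frac{t}{2}\lap_g}$ are again strictly positive and bounded, giving the same injectivity and inclusion facts. The operator $e^{-\frac{\kappa^2}{4}\lap_g}$ from \eqref{eqn:se_op_def} has multipliers $e^{\frac{\kappa^2}{4}\lambda_n}=e^{\frac{t}{2}\lambda_n}$, so the domain condition $\sum_n e^{t\lambda_n}\abs{\innerprod{f}{f_n}}^2 = \sum_n e^{t\lambda_n}e^{-t\lambda_n}\abs{\innerprod{g}{f_n}}^2 = \norm{g}_{L^2(M)}^2$ is finite for $f=e^{\frac{t}{2}\lap_g}g$, it maps $f\mapsto g$, and is an isometric isomorphism onto $L^2(M)$. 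That yields \eqref{eqn:operators_with_spaces}.

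I do not expect a genuine obstacle — the statement is essentially an unwinding of the definitions. The one thing deserving care, and the only place where the exact exponents matter, is the index matching: the power $\nu/2 + d/4$ is half the Sobolev index $\nu + d/2$, and $\kappa^2/4$ is half the diffusion index $\kappa^2/2$, precisely so that each space is the image of the relevant operator's inverse; verifying the functional-calculus domain condition, which guarantees that the forward operator applied to an element of that space converges in $L^2(M)$ (rather than diverging), is then the only substantive, though routine, point.
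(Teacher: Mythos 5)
Your argument is correct and coincides with the paper's, which disposes of this theorem in one line (``Immediate by definition of $H^{\nu+\frac{d}{2}}$ and $\c{H}^{\frac{\kappa^2}{2}}$''); you have simply written out the unwinding that the authors leave implicit, including the useful observations that the defining maps are injective isometries and that the exponents $\nu/2+d/4$ and $\kappa^2/4$ are exactly half the respective space indices. No gaps.
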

\begin{proof}
Immediate by definition of $H^{\nu + \frac{d}{2}}$ and $\c{H}^{\frac{\kappa^2}{2}}$.
\end{proof}

Now, we suppose that $\c{L}$ is one of the operators from \eqref{eqn:operators_with_spaces} and $H$ is the corresponding space such that $\c{L}: H \to L^2(M)$. 
Since the conditions of Theorem \ref{thm:lot4.2.2} are satisfied, the solution of \eqref{eqn:formal_spde} is a zero-mean generalized Gaussian field $\f{F}$ defined by \eqref{eqn:spde_solution}. 
We now compute the covariance operator of~$\f{F}$, which is
\[
\E \del{\f{F}(h) \f{F}(g)} = \E \del{
\c{W}\del{\del{\c{L}^{-1}}^* h}
\c{W}\del{\del{\c{L}^{-1}}^* g}
} = \innerprod{\del{\c{L}^{-1}}^* h}{\del{\c{L}^{-1}}^* g}_{L^2(M)},
\]
and since $\innerprod{a}{b}_{H} = \innerprod{\c{L} a}{\c{L} b}_{L^2(M)}$ is clear from definitions \ref{dfn:sobolev_space} and \ref{dfn:diffusion_space}, we have for every $h \in H$ and $u \in L^2(M)$
\[
\innerprod{\del{\c{L}^{-1}}^* h}{u}_{L^2(M)}
= 
\innerprod{h}{\c{L}^{-1} u}_{H}
=
\innerprod{\c{L} h}{\c{L} \c{L}^{-1} u}_{L^2(M)}
=
\innerprod{\c{L} h}{u}_{L^2(M)}
.
\]
This means that $\del{\c{L}^{-1}}^* = \c{L}$ and thus
\[
\E \del{\f{F}(h) \f{F}(g)}
=
\innerprod{\del{\c{L}^{-1}}^* h}{\del{\c{L}^{-1}}^* g}_{L^2(M)}
=
\innerprod{\c{L} h}{\c{L} g}_{L^2(M)}
=
\innerprod{h}{g}_{H},
\]
so $\f{F}$ is a Gaussian white noise over $H$.

We now want to obtain a Gaussian process indexed by $M$ from the generalized Gaussian field $\f{F}$. 
That is, we want to define $\f{F}(x)$ for $x \in M$ and to compute covariance function of such $\f{F}$. 
This can be easily done thanks to the fact that $H$ is a reproducing kernel Hilbert space, which was proven in \textcite[Theorem 8, Proposition 2]{devito19}---note that for the Sobolev spaces $H^s$ under consideration we always have $s>d/2$ since $s = \nu + d/2$, $\nu > 0$.

Let $k(x, x')$ be the reproducing kernel of $H$. 
It is natural to define $\f{F}(x) = \f{F}(k(x, \cdot))$ for $x \in M$. This $\f{F}(x)$ will be a Gaussian random variable by Definition \ref{dfn:lot3.2.10}. 
Moreover,
\[
\E \del{\f{F}(x) \f{F}(x')} = \innerprod{k(x, \cdot)}{k(x', \cdot)}_{H} = k(x, x')
\]
by the definition of a reproducing kernel. It follows that $\cbr{\f{F}(x)}_{x \in M}$ is a Gaussian process in the standard sense with zero mean and covariance function $k$ which is the reproducing kernel of $H$.\footnote{It is easy to see that $\c{B}_{\f{F}}(g) := \innerprod{\f{F}}{g}_H$, where $\f{F}$ is the Gaussian process on $M$, is the generalized Gaussian field $\f{F}$ we started with.}

The reproducing kernels for Sobolev spaces are given in \textcite[Proposition 2]{devito19} as
\[
k(x, x')
=
\sum_{n=0}^\infty \del{1 + \lambda_n}^{-\nu-\frac{d}{2}} f_n(x)f_n(x').
\]
An analogous statement is true for the Diffusion spaces, giving
\[
k(x, x')
=
\sum_{n=0}^\infty e^{-\frac{\kappa^2}{2} \lambda_n} f_n(x)f_n(x')
\]
with the proof repeating the proof of \cite[Proposition 2]{devito19} mutatis mutandis.

Thus, the kernels normalized to have average variance $\sigma^2$ are given by
\<
k_{\nu}(x, x')
&=
\frac{\sigma^2}{C_{\nu}}
\sum_{n=0}^\infty \del{1 + \lambda_n}^{-\nu-\frac{d}{2}} f_n(x)f_n(x')
\\
\label{eqn:k_inf_apdx_orig}
k_{\infty}(x, x')
&=
\frac{\sigma^2}{C_{\infty}}
\sum_{n=0}^\infty e^{-\frac{\kappa^2}{2} \lambda_n} f_n(x)f_n(x'),
\>
where the constant $C_{(\cdot)}$ is chosen so that $\vol_g(M)^{-1} \int k_{(\cdot)}(x, x) \d x = \sigma^2$. 
In some cases, for instance when $M$ is a homogeneous manifold, $k_{(\cdot)}(x, x)$ will not depend on $x$, so $k(x, x) = \sigma^2$ can be satisfied.\footnote{It is not known to the authors if homogeneous manifolds are the only manifolds for which $k(x, x)$ does not depend on $x$. It seems like an interesting mathematical problem to describe manifolds with this property. It is even more interesting to describe how the way $k(x, x)$ changes depending on $x$ is determined by the geometry of $M$.}

Note that throughout the above, we still assumed $\kappa$ is chosen such that $2 \nu / \kappa^2 = 1$. 
To show this assumption was indeed taken without loss of generality, we prove the following.

\reductiontheorem*
\begin{proof}
  First, let us verify that the equation to the left is well-defined.
  To do this, we must check that operator \eqref{eqn:matern_op_def} is bounded and invertible for general $\kappa, \nu > 0$.
  Fix $f \in H^{\nu + \frac{d}{2}}$ and find $g \in L^2(M)$ such that $f = (1 - \lap_g)^{-\frac{\nu}{2}-\frac{d}{4}} g$.
  Write $g = \sum_{n=0}^\infty \alpha_n f_n$ using the basis $\cbr{f_n}$ consisting of Laplacian eigenfunctions, so $f = \sum_{n=0}^\infty \del{1+\lambda_n}^{-\frac{\nu}{2}-\frac{d}{4}} \alpha_n f_n$.
  Noting that 
  \<
  \min\del{\frac{2\nu}{\kappa^2}, 1} \leq \frac{\frac{2\nu}{\kappa^2} + \lambda_n}{1+\lambda_n} \leq \max\del{1, \frac{2\nu}{\kappa^2}}
  \>
  we can write
  \<
  &\norm{
    \del{\frac{2 \nu}{\kappa^2} - \lap_g}^{\frac{\nu}{2}+\frac{d}{4}} f
  }_{L^2(M)}^2
  =
  \norm{
    \sum\limits_{n=0}^\infty
    \del{\frac{2 \nu / \kappa^2 + \lambda_n}{1+\lambda_n}}^{\frac{\nu}{2}+\frac{d}{4}}
    \alpha_n f_n
  }_{L^2(M)}^2
  \\
  &=
  \sum\limits_{n=0}^\infty
  \del{\frac{2 \nu / \kappa^2 + \lambda_n}{1+\lambda_n}}^{\nu+\frac{d}{2}}
  \alpha_n^2
  \leq
  \sum\limits_{n=0}^\infty
  \max\del{1, \frac{2\nu}{\kappa^2}}^{\nu+\frac{d}{2}}
  \alpha_n^2
  \\
  &=
  \max\del{1, \frac{2\nu}{\kappa^2}}^{\nu+\frac{d}{2}} \norm{g}_{L^2(M)}^2
  =
  \max\del{1, \frac{2\nu}{\kappa^2}}^{\nu+\frac{d}{2}} \norm{f}_{H^{\nu + \frac{d}{2}}}^2
  ,
  \>
  which proves boundedness as well as $f \in D \del{\del{\frac{2 \nu}{\kappa^2} - \lap_g}^{\frac{\nu}{2}+\frac{d}{4}}}$.
  To prove the operator is invertible, write
  \<
  &\norm{
    \del{\frac{2 \nu}{\kappa^2} - \lap_g}^{\frac{\nu}{2}+\frac{d}{4}} f
  }_{L^2(M)}^2
  =
  \sum\limits_{n=0}^\infty
  \del{\frac{2 \nu / \kappa^2 + \lambda_n}{1+\lambda_n}}^{\nu+\frac{d}{2}}
  \alpha_n^2
  \\
  &\geq
    \min\del{\frac{2\nu}{\kappa^2},1}^{\nu+\frac{d}{2}}
    \sum\limits_{n=0}^\infty
    \alpha_n^2
  =
    \min\del{\frac{2\nu}{\kappa^2},1}^{\nu+\frac{d}{2}}
    \norm{f}_{H^{\nu + \frac{d}{2}}}^2
  .
  \>
  Now, consider how a change of the metric from $g$ to $\tilde{g} = \frac{2 \nu}{\kappa^2} g$ changes the objects under consideration. 
  This is given by the standard expressions
  \[
    \lap_{\tilde{g}} = \frac{\kappa^2}{2 \nu} \lap_g
    ,
    \qquad
    \widetilde{dx} = \del{\frac{2 \nu}{\kappa^2}}^{d/2} dx
    ,
  \]
  which in turn gives
  \[
    \tilde{\lambda}_n = \frac{\kappa^2}{2 \nu} \lambda_n
    ,
    \quad
    \tilde{f_n} = \del{\frac{2 \nu}{\kappa^2}}^{-d/4} f_n
    ,
    \quad
    \innerprod{f}{g}_{\tilde{g}} = \del{\frac{2 \nu}{\kappa^2}}^{d/2} \innerprod{f}{g}
    ,
    \quad
    \c{W}_{\tilde{g}} = \del{\frac{2 \nu}{\kappa^2}}^{d/4} \c{W}
    .
  \]
  With this, we have
  \<
    \del{1 - \lap_{\tilde{g}}}^{\frac{\nu}{2}+\frac{d}{4}} \f{G}
    = 
    \sum_{n=0}^\infty \del{1 + \tilde{\lambda}_n}^{\frac{\nu}{2}+\frac{d}{4}} \innerprod{\f{G}}{\tilde{f_n}}_{\tilde{g}} \tilde{f_n}
    = 
    \sum_{n=0}^\infty \del{1 + \frac{\kappa^2}{2 \nu} \lambda_n}^{\frac{\nu}{2}+\frac{d}{4}} \innerprod{\f{G}}{f_n} f_n
    \\
    =
    \del{\frac{\kappa^2}{2 \nu}}^{\frac{\nu}{2}+\frac{d}{4}}
    \sum_{n=0}^\infty \del{\frac{2 \nu}{\kappa^2} + \lambda_n}^{\frac{\nu}{2}+\frac{d}{4}} \innerprod{\f{G}}{f_n} f_n
    =
    \del{\frac{\kappa^2}{2 \nu}}^{\frac{\nu}{2}+\frac{d}{4}}
    \del{\frac{2 \nu}{\kappa^2} - \lap_g}^{\frac{\nu}{2}+\frac{d}{4}}
    \f{G}
    .
  \>
  This means that $\f{G}$ is a solution of
  \[
    \del{\frac{\kappa^2}{2 \nu}}^{\frac{\nu}{2}+\frac{d}{4}}
    \del{\frac{2 \nu}{\kappa^2} - \lap_g}^{\frac{\nu}{2}+\frac{d}{4}}
    \f{G}
    =
    \del{\frac{2 \nu}{\kappa^2}}^{d/4} \c{W}
    .
  \]
  Gathering all constants, we get that $\f{F} = \del{\frac{\kappa^2}{2 \nu}}^{\frac{\nu}{2}+\frac{d}{4}} \del{\frac{2 \nu}{\kappa^2}}^{-d/4} \f{G} = \del{\frac{\kappa^2}{2 \nu}}^{\frac{\nu+d}{2}} \f{G}$ is the solution to
  \[
  \del{\frac{2 \nu}{\kappa^2} - \lap_g}^{\frac{\nu}{2}+\frac{d}{4}} \f{F} = \c{W}
  \]
  which proves the statement.
\end{proof}

This means that the kernel of a Gaussian process solving $\del{\frac{2 \nu}{\kappa^2} - \lap_g}^{\frac{\nu}{2}+\frac{d}{4}} \f{F} = \c{W}$ is proportional to
\[
k(x, x')
=
\sum_{n=0}^\infty \del{1 + \tilde{\lambda}_n}^{-\nu-\frac{d}{2}} \tilde{f_n}(x)\tilde{f_n}(x') 
=
\del{\frac{\kappa^2}{2 \nu}}^{-\nu-d}
\sum_{n=0}^\infty \del{\frac{2 \nu}{\kappa^2} + \lambda_n}^{-\nu-\frac{d}{2}} f_n(x) f_n(x') 
.
\]
Re-normalizing this kernel, we finally get
\[
k_{\nu}(x, x')
=
\frac{\sigma^2}{C_{\nu}}
\sum_{n=0}^\infty \del{\frac{2 \nu}{\kappa^2} + \lambda_n}^{-\nu-\frac{d}{2}} f_n(x) f_n(x') 
,
\]
where $C_{\nu}$ is chosen as above and $\kappa$ can now be any positive number. Together with
\[
k_{\infty}(x, x')
=
\frac{\sigma^2}{C_{\infty}}
\sum_{n=0}^\infty e^{-\frac{\kappa^2}{2} \lambda_n} f_n(x)f_n(x')
\]
given in \eqref{eqn:k_inf_apdx_orig}, this gives the kernels we sought, and concludes our presentation.

\end{document}